\documentclass{article}

\usepackage{microtype}
\usepackage{graphicx}
\usepackage{booktabs} 

\usepackage{hyperref}

\usepackage[preprint]{icml2026}

\usepackage{amsmath}
\usepackage{amssymb}
\usepackage{mathtools}
\usepackage{amsthm}
\usepackage{mathrsfs}

\usepackage{multirow}
\usepackage{subfigure}
\usepackage[normalem]{ulem}
\usepackage{array}
\useunder{\uline}{\ul}{}

\usepackage[capitalize,noabbrev]{cleveref}

\theoremstyle{plain}
\newtheorem{theorem}{Theorem}[section]

\newtheorem{lemma}[theorem]{Lemma}

\theoremstyle{definition}

\theoremstyle{remark}

\usepackage[textsize=tiny]{todonotes}

\icmltitlerunning{Improving Graph Few-shot Learning with Hyperbolic Space and Denoising Diffusion}

\begin{document}

\twocolumn[
  \icmltitle{Improving Graph Few-shot Learning with Hyperbolic \\ Space and Denoising Diffusion}



  \icmlsetsymbol{equal}{*}

  \begin{icmlauthorlist}
    \icmlauthor{Yonghao Liu}{jlu}
    \icmlauthor{Jialu Sun}{jlu}
    \icmlauthor{Wei Pang}{hwu}
    \icmlauthor{Fausto Giunchiglia}{ut}
    \icmlauthor{Ximing Li}{jlu}
    \icmlauthor{Xiaoyue Feng}{jlu}
    \icmlauthor{Renchu Guan}{jlu}
  \end{icmlauthorlist}

  \icmlaffiliation{jlu}{Key Laboratory of Symbolic Computation and Knowledge Engineering of the Ministry of Education, College of Computer Science and Technology, Jilin University}
  \icmlaffiliation{hwu}{School of Mathematical and Computer Sciences, Heriot-Watt University}
  \icmlaffiliation{ut}{Department of Information Engineering and Computer Science, University of Trento}

  \icmlcorrespondingauthor{Xiaoyue Feng}{fengxy@jlu.edu.cn}
  \icmlcorrespondingauthor{Renchu Guan}{guanrenchu@jlu.edu.cn}

  \icmlkeywords{Machine Learning, ICML}

  \vskip 0.3in
]



\printAffiliationsAndNotice{}  

\begin{abstract}
Graph few-shot learning, which focuses on effectively learning from only a small number of labeled nodes to quickly adapt to new tasks, has garnered significant research attention. Despite recent advances in graph few-shot learning that have demonstrated promising performance, existing methods still suffer from several key limitations. First, during the meta-training phase, these methods typically perform node representation learning in Euclidean space, which often fails to capture the inherently hierarchical structure existing in real-world graph data.
Second, during the meta-testing phase, they usually fit an empirical target distribution derived from only a few support samples, even when this distribution significantly deviates from the true underlying distribution. To address these issues, we propose \textbf{IMPRESS}, a novel framework that \textbf{IM}proves gra\textbf{P}h few-shot learning with hype\textbf{R}bolic spac\textbf{E} and denoi\textbf{S}ing diffu\textbf{S}ion. Specifically, our model learns node representations in a hyperbolic space and enriches the support distribution through denoising diffusion mechanisms. Theoretically, IMPRESS achieves a tighter generalization bound. Empirically, IMPRESS consistently outperforms competitive baselines across multiple benchmark datasets. 
\end{abstract}

\section{Introduction}
Graph Neural Networks (GNNs) have become the \textit{de facto} standard for modeling graph-structured data due to their unique advantages in capturing complex relational structures \cite{wu2020comprehensive, liu2025high}. However, most existing GNN models heavily rely on a large amount of labeled data to fully unleash their potential \cite{huang2020graph, liu2022few}. This requirement is often impractical in real-world scenarios, where obtaining high-quality annotations can be prohibitively expensive or even infeasible \cite{liu2025enhancing, liu2025dual}. For example, in the field of bioinformatics, accurately annotating unknown genes requires deep expertise in molecular biology, posing a significant challenge even for seasoned researchers \cite{hu2020open}. Given these challenges, graph few-shot learning (FSL) has emerged as a promising solution. By requiring only a few labeled nodes, graph FSL enables models to rapidly adapt to new tasks, making it an attractive research direction \cite{zhang2025few, liu2025graph}. Most existing graph FSL models follow a two-stage paradigm: during the meta-training phase, they perform node representation learning and capture generalizable knowledge across multiple tasks; during the meta-testing phase, they leverage this prior knowledge to adapt quickly to target tasks using only limited labeled data \cite{zhou2019meta, kim2023task}. Despite the remarkable progress made in recent years, current graph FSL methods still face several notable limitations that hinder their performance.

First, current graph FSL models typically perform node representation learning in Euclidean space during the meta-training phase. While these operations are more straightforward in Euclidean space---benefiting from well-established operators and computational efficiency---real-world graph-structured data often exhibit hierarchical or tree-like distributions \cite{peng2021hyperbolic, yang2022hyperbolic}. 
For example, as shown in Fig. \ref{example} (a), a company’s organizational structure can be modeled as a graph, with employees being represented as nodes and departments or teams as edges. The hierarchy from top management (\textit{e.g.}, CEO) to mid-level managers (\textit{e.g.}, directors) and staff forms a hierarchical graph structure that reflects decision-making processes and task delegation. Moreover, we visualize the hierarchical structure of academic research topics in Fig. \ref{example} (b). Specifically, the research area of artificial intelligence typically encompasses subfields such as natural language processing and computer vision, which in turn contain more specialized topics such as text generation and image classification.
Under such hierarchical organization, Euclidean embeddings often suffer from significant distortion, making it difficult to faithfully capture the underlying structural relationships.

\begin{figure}
    \centering
    \subfigure[Company Orinization]{\includegraphics[width=0.2\textwidth]{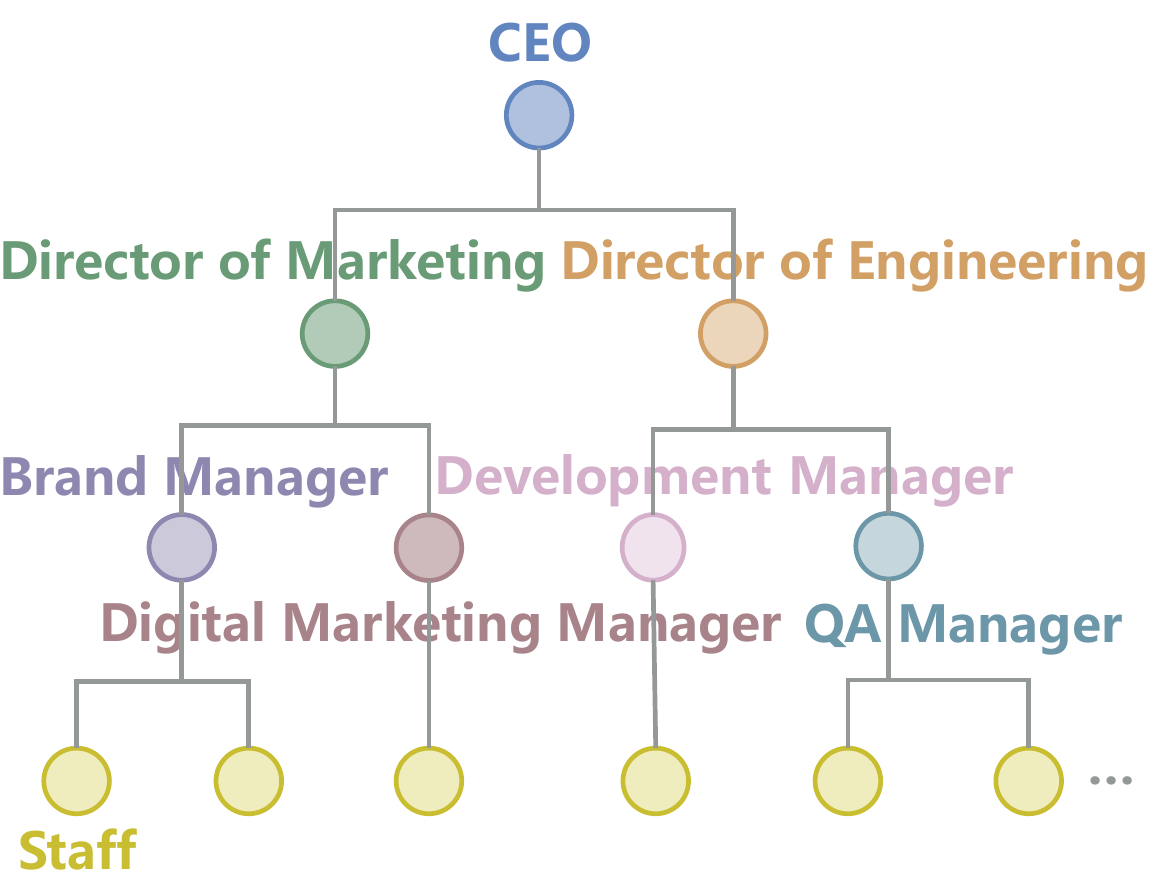}}
    \subfigure[Research Topic]{\includegraphics[width=0.2\textwidth]{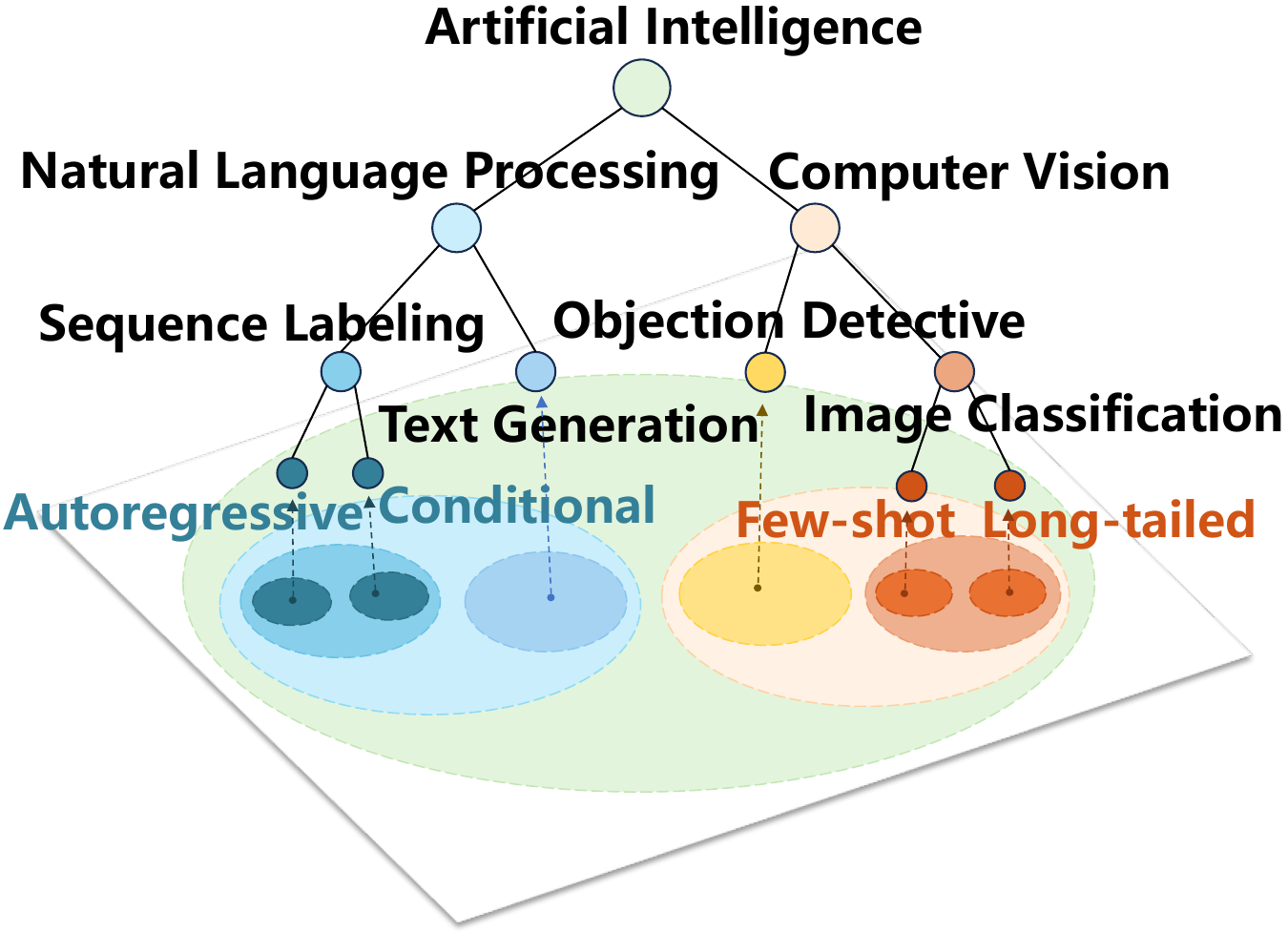}}
    \caption{Examples of hierarchical structures in graph data.}
    \label{example}
\end{figure}

Second, in the meta-testing phase, existing graph FSL models typically perform parameter fine-tuning using only a small set of labeled support nodes. These models implicitly assume that the distribution of the support set can serve as a reliable proxy for the true class distribution, and that training a classifier directly on these samples will yield accurate decision boundaries. 
However, this assumption introduces two critical issues. \textit{On the one hand}, the support set often fails to capture the true data distribution due to its extremely limited size, leading to biased and unrepresentative distributions. To support our claim, we visualize the distributional discrepancy between the support set and the query set in a randomly sampled task from the Cora and CiteSeer datasets \cite{yang2016revisiting} using a histogram-based density plot. As shown in Fig. \ref{vis}, the support set clearly exhibits a biased distribution, which deviates from that of the query set.
\textit{On the other hand}, training the downstream classifier directly on such sparse data is highly susceptible to overfitting, which significantly impairs the model’s generalization ability.

\begin{figure}
    \centering
    \subfigure[Cora]{\includegraphics[width=0.2\textwidth]{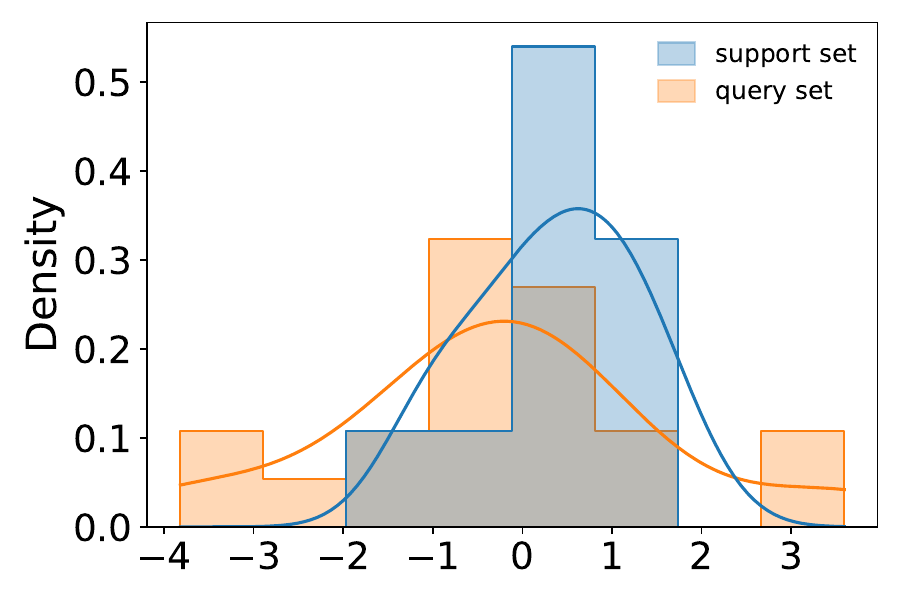}}
    \subfigure[CiteSeer]{\includegraphics[width=0.2\textwidth]{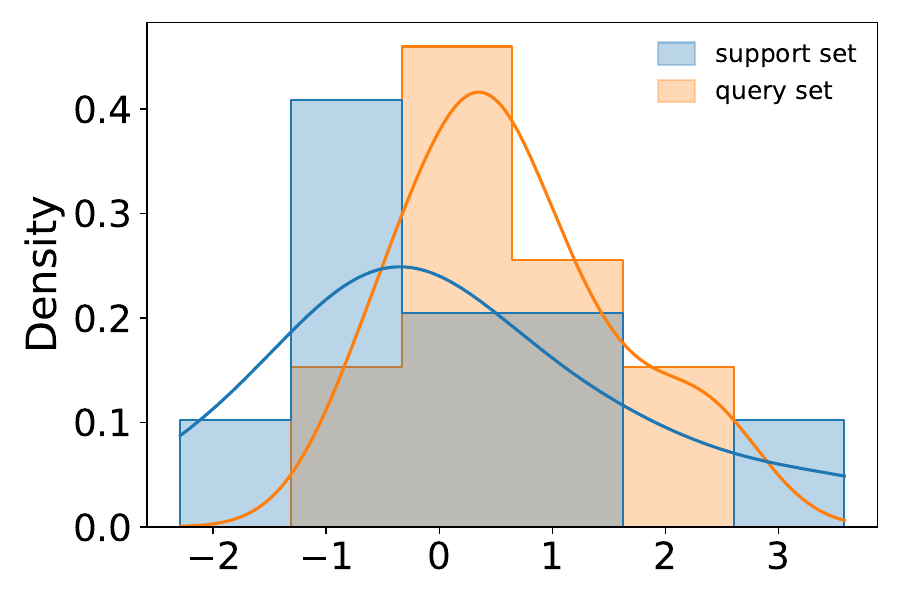}}
    \caption{Data Distribution of randomly selected support set and query set from the Cora and CiteSeer datasets.}
    \label{vis}
\end{figure}

To address the aforementioned issues, we propose a novel framework named \textbf{IMPRESS} that \textbf{IM}poves gra\textbf{P}h few-shot learning with hype\textbf{R}bolic spac\textbf{E} and denoi\textbf{S}ing diffu\textbf{S}ion.
Specifically, during the meta-training phase, we first train a hyperbolic variational graph autoencoder to explicitly capture the hierarchical structure inherent in graph data. In the meta-testing phase, inspired by the powerful generative capabilities of latent diffusion models \cite{rombach2022high}, we aim to leverage them to generate a large number of support node embeddings in the latent space, thereby enabling a more accurate approximation of the class decision boundaries.
However, a major challenge arises: due to the limited number of labeled support samples in few-shot scenarios, it is impractical to train a diffusion model solely on the support set. To address this, we leverage the abundant data available during meta-training to train the diffusion model.
Nevertheless, this introduces another challenge---since the class labels in meta-training and meta-testing are disjoint, the diffusion model trained on base classes may fail to generate samples that align with the novel class distributions.
To overcome this issue, we incorporate class prototype information as a conditioning signal during the diffusion training process. This guidance helps the model generate embeddings that better align with the target class, under the supervision of its corresponding prototype.
Theoretically, our method achieves a tighter generalization bound compared to prior approaches. Empirically, it consistently outperforms existing models across multiple benchmark datasets. In summary, our contributions are as follows.

(I) We propose a novel framework, namely IMPRESS, which improves graph FSL with hyperbolic space and denoising diffusion.

(II) We theoretically elucidate the underlying mechanisms that contribute to the effectiveness of our method.

(III) We empirically demonstrate that our approach achieves superior performance across multiple evaluation datasets.
\section{Preliminary Study}
\textbf{Experimental Setting.} Given a graph $\mathcal{G} = \{\mathcal{V},\mathcal{E},X,A\}$, where $\mathcal{V} = \{v_1, \cdots, v_n\}$ and $\mathcal{E} = \{e_1, \cdots, e_m\}$ represent the sets of nodes and edges, respectively. $X \in \mathbb{R}^{n \times d}$ denotes the node feature matrix, and $A \in \mathbb{R}^{n \times n}$ is the adjacency matrix. In this work, we focus on the widely studied few-shot node classification problem in graph FSL setting to assess model performance. In conventional \textit{few-shot node classification}, the meta-training task set $\mathcal{D}_{\text{tra}}$ comprises a collection of tasks $\mathcal{T}_{\text{tra}} = \{\mathcal{S}_{\text{tra}}, \mathcal{Q}_{\text{tra}}\}$, where $\mathcal{S}_{\text{tra}} = \{(v^{\text{tra}}_i, y^{\text{tra}}_i)\}_{i=1}^{NK}$ denotes the support set, and $\mathcal{Q}_{\text{tra}} = \{(v^{\text{tra}}_i, y^{\text{tra}}_i)\}_{i=1}^{NQ}$ denotes the corresponding query set. Each label $y^{\text{tra}}_i$ belongs to the base class set $\mathcal{Y}_{\text{tra}}$. Likewise, the meta-testing task $\mathcal{T}_{\text{tes}} = \{\mathcal{S}_{\text{tes}}, \mathcal{Q}_{\text{tes}}\}$ is constructed in the same manner as in meta-training, except that the labels come from a novel class set $\mathcal{Y}_{\text{tes}}$, where $\mathcal{Y}_{\text{tra}} \cap \mathcal{Y}_{\text{tes}} = \emptyset$. When the support set $\mathcal{S}_{\text{tes}}$ includes $N$ classes and each class contains $M$ labeled nodes, the task is referred to as an $N$-way $M$-shot task.
Generally, graph FSL models are trained over $\mathcal{D}_{\text{tra}}$, fine-tuned on $\mathcal{S}_{\text{tes}}$, and evaluated on $\mathcal{Q}_{\text{tes}}$. However, in our work, we focus on a more challenging setting---\textit{unsupervised few-shot node classification} \cite{tan2023virtual, jung2024unsupervised}. Specifically, while the meta-testing stage remains the same as traditional few-shot node classification, the meta-training phase is fundamentally different. In this scenario, we discard node labels entirely and instead train on a unified \textit{unlabeled} set $\mathcal{D}_{\text{tra}} = \{v_i\}_{i=1}^{sum}$.

\noindent \textbf{Poincar{\'e} Ball.} It is a widely adopted model of hyperbolic geometry due to its conformal property and numerical stability \cite{peng2021hyperbolic}. Typically, an open $n$-dimensional ball is defined as follows:
\begin{equation}
    \mathcal{B}^n_c = \left\{ x \in \mathbb{R}^n : \|x\|^2 < -1/c \right\},
    \label{eq:1}
\end{equation}
where $c<0$ is negative curvature, and $1/\sqrt{|c|}$ is radius. 

\noindent \textbf{Tangent Space.} For a point $x\in\mathcal{B}_{c}^n$, the tangent space $\mathscr{T}_x\mathcal{B}^n_{c}$ under the negative curvature \(c\) is defined as the vector space consisting of all tangent vectors $v$ at that point, which can be formally defined as follows:
\begin{equation}
    \mathscr{T}_{x}\mathcal{B}^n_c = \{ v \in \mathbb{R}^n: \langle v,x\rangle_{L}=0\},
\end{equation}
where $\langle\cdot,\cdot\rangle_L$ is the Lorentzian inner product. To enable information transformation between the tangent space and the hyperbolic space, a commonly adopted strategy \cite{peng2021hyperbolic} involves the use of the exponential and logarithmic maps to perform bidirectional projections---mapping points $v$ from the tangent space to the hyperbolic manifold and vice versa. These mapping operations are formally defined as follows:
\begin{equation}
\label{eq:2}
\begin{aligned}
    &\exp ^c_x(v)=x \oplus_c \left( \tanh\left( \sqrt{|c|} \frac{\lambda_x^c \|v\|_2}{2} \right) \cdot \frac{v}{\sqrt{|c|} \|v\|_2} \right), \\
    &\log^c_{x}(u) = \frac{2\operatorname{arctanh}\left( \sqrt{|c|} \left\| -x \oplus_c u \right\|_2 \right)}{\sqrt{|c|} \lambda^c_{x}} \cdot  \frac{-x \oplus_c u}{\left\| -x \oplus_c u \right\|_2},
\end{aligned}
\end{equation}
where $x$ is the base point, $\oplus_c$ denotes the M{\"o}bius addition and $\lambda_x^c = 2/(1+c\|x\|_2)$ is the conformal factor. When $x=o$, the tangent space is constructed at the origin of the Poincar{\'e} ball, Eq.\ref{eq:2} can be simplified as follows:
\begin{equation}
\begin{aligned}
    &\exp^c_o(v) = \tanh\left( \sqrt{|c|} \|v\|_2 \right) \cdot \frac{v}{\sqrt{|c|} \|v\|_2}, \\
    &\log^c_{o}(u) = \operatorname{artanh}\left( \sqrt{|c|} \left\| u \right\|_2 \right) \cdot \frac{u}{\sqrt{|c|} \left\|u\right\|_2}.
    \label{eq:3}
\end{aligned}
\end{equation}

To facilitate the understanding, the notations used throughout this research are summarized in \textbf{Appendix} \ref{description_symbols}.
\section{Method}

In this section, we present a detailed description of our proposed model, IMPRESS, which comprises three main components: \textit{hyperbolic node representation learning}, \textit{prototype-guided denoising diffusion}, and the \textit{model prediction} module. 
The overall workflow of IMPRESS is illustrated in Fig. \ref{fig:framework} for better clarify.

\begin{figure*}[htbp]
    \centering{\includegraphics[width=0.72\textwidth]{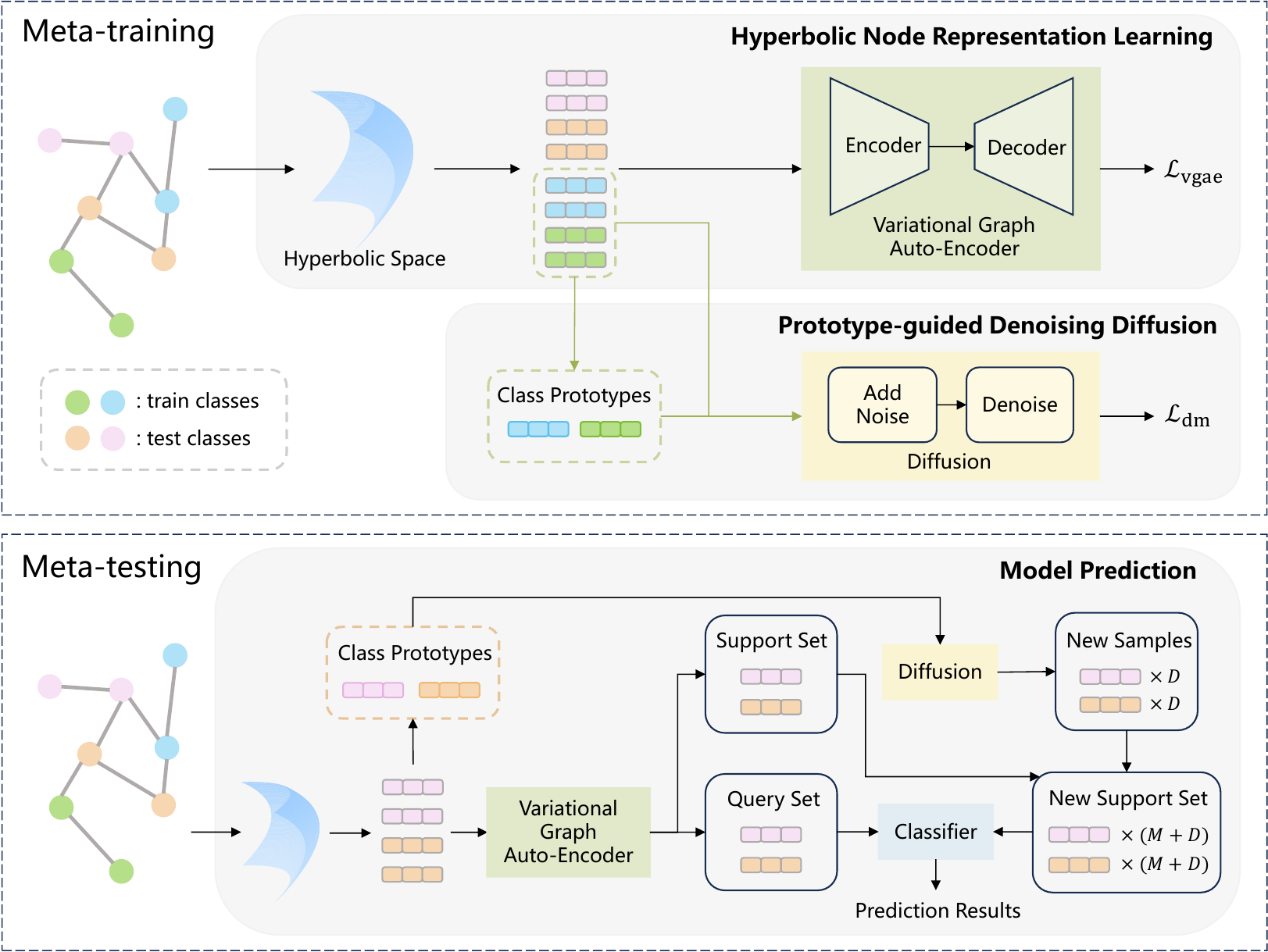}}
    \caption{The overall framework of our model.}
    \label{fig:framework}
\end{figure*}

\subsection{Hyperbolic Node Representation Learning}
The first step in graph FSL typically involves node representation learning, which aims to capture discriminative features of nodes. 
As discussed before, existing graph FSL models operate in Euclidean space, which inevitably limits their ability to capture the hierarchical structures inherent in graphs. Thus, we incorporate hyperbolic operations into the node representation module to explicitly model such hierarchical geometric properties.
Specifically, we first insert zero elements at the 0-th dimension of the node's Euclidean features, and then apply the exponential map to project it from the tangent space to the hyperbolic space, as shown below:
\begin{equation}
    X_{i}^{0,\mathbb H} = \exp_{o}^{c}([0, X_i]),
    \label{eq:hyperbolic}
\end{equation}
where $X_i^{0,\mathbb H}$ is the initial hyperbolic node state. By prepending zero elements to $X$, we ensure that $[0, X_i]$ resides in the tangent space at the origin of the hyperbolic space, thereby aligning it with the underlying hyperbolic geometry \cite{yang2022hicf}.

Next, to avoid relying on any label information during the meta-training phase, we adopt a variational graph autoencoder to learn hyperbolic latent representations of nodes.

\subsubsection{Encoder}
Given the complexity of directly performing feature transformations and nonlinear operations in hyperbolic space \cite{ganea2018hyperbolic}, we instead leverage exponential and logarithmic maps via the tangent space to achieve this objective in an indirect yet effective manner.
Concretely, we project the hyperbolic initial node states to the tangent space via the logarithmic map. Then, we perform graph convolution operations $\Phi(\cdot)$ in the tangent space. The above processes can be formally expressed as follows:
\begin{equation}
\label{encoder}
    \begin{aligned}
        &X^{0,\mathscr{T}}=\log_o^c(X^{0,\mathbb H}), \\
        &X^{t+1,\mathscr{T}}=\Phi(A,X^{t,\mathscr{T}})=\text{ReLU}(\hat{A}X^{t,\mathscr{T}}W^t),
    \end{aligned}
\end{equation}
where $X^{0,\mathscr{T}}$ and $X^{t,\mathscr{T}}$ denote the original and the $t$-th node features in the tangent space, respectively. $\hat{A} = \tilde{D}^{-\frac{1}{2}} \tilde{A} \tilde{D}^{-\frac{1}{2}}$ represents the symmetrically normalized adjacency matrix with self-loops, where $\tilde{A}=A+I$ and $\tilde{D}_{ii} = \sum_{j}\tilde{A}_{ij}$. Moreover, $W^t$ and ReLU$(\cdot)$ denote the layer-specific learnable weight matrix and the nonlinear activation function, respectively. 

After conducting $T$ layers of graph convolution, we obtain the node embeddings denoted as $X^{T,\mathscr{T}}$. Then, we map $X^{T,\mathscr{T}}$ back to the hyperbolic space via the exponential map, as shown below:
\begin{equation}
\label{covert_hyperbolic}
    X^\mathbb H=\exp_o^c(X^{T,\mathscr{T}}).
\end{equation}

We transform the learned hyperbolic node embeddings into latent variable distributions using a specific function, which can be expressed as follows: 
\begin{equation}
\label{vgae_distribution}
    q(Z|X^\mathbb H,A)=\prod_{i=1}^Nq(z_i|X^\mathbb H,A)=\prod_{i=1}^N\mathcal{N}(z_i|\mu_{z_i},\text{diag}(\sigma_{z_i}^2)),
\end{equation}
where $q(Z|X^\mathbb H,A)$ is the joint distribution of latent variable for all nodes. $\mu_{z_i}$ and $\sigma^2_{z_i}$ denote the mean vector of the Gaussian distribution and the associated variance vector. They can be modeled by the following functions:
\begin{equation}
    \label{vgae_matrix}
    \begin{aligned}
        &Z^\prime= \phi(A\log_o^c( X^\mathbb H)\Theta^\prime), \\
        &Z_\mu=AZ^\prime\Theta_\mu, \quad Z_\sigma=AZ^\prime\Theta_\sigma,
    \end{aligned}
\end{equation}
where $Z_\mu$ and $Z_\sigma$ denote the matrix of mean vectors $\mu_{z_i}$ and variance vectors $\sigma^2_{z_i}$. They share the first-layer parameters $\Theta^\prime$. We employ the reparameterization trick to sample latent variable $z_i$ from the latent distribution, which is defined as follows:
\begin{equation}
\label{vgae_sample}
    z_i=\mu_{z_i}+\sigma_{z_i} \odot\epsilon_{z_i},
\end{equation}
where $\epsilon_{z_i}\sim \mathcal{N}(0,1)$ follows the standard normal distribution and $\odot$ is the Hadamard product.

\subsubsection{Decoder}
For the decoder, we implement it by the inner product between latent variables $z_i$ and $z_j$, expressed as follows:
\begin{equation}
\label{decoder}
\begin{aligned}
  &p(A|Z)=\prod_{i=1}^N\prod_{j=1}^Np(A_{ij}|z_i,z_j), \\ &p(A_{ij}=1|z_i,z_j)=\psi(z_i^\top z_j), 
\end{aligned}
\end{equation}
where $\psi(\cdot)$ represents the sigmoid function.

We optimize the variational lower bound by reconstructing the adjacency matrix and encouraging the learned latent distribution to approximate a standard normal distribution as closely as possible:
\begin{equation}
\label{vgae_loss}
\begin{aligned}
    \mathcal{L}_{\text{vgae}}
    &=\mathbb{E}_{q(Z \mid X^H, A)} \left[ \log p \left( A \mid Z \right) \right] \\
    &- \text{KL} \left[ q(Z \mid X^\mathbb H, A) \parallel p(Z) \right],
\end{aligned}
\end{equation}
where KL$(\cdot)$ is the Kullback-Leibler divergence that is adopted to measure the difference between distributions. 

Upon completing the training of the hyperbolic variational graph autoencoder during the meta-training phase, we can obtain latent node representations that effectively capture the hierarchical structure embedded in the graph.

\subsection{Prototype-guided Denoising Diffusion}
Directly training a classifier on the limited support embeddings available during the meta-testing phase may lead to severe overfitting. Therefore, we train a denoising diffusion model to synthesize a large number of auxiliary samples, enriching the original support distribution. Specifically, we first utilize the trained hyperbolic variational graph autoencoder to obtain numerous clean node embeddings $Z_0$ from the meta-training phase. These embeddings are then subjected to a forward diffusion process, as defined below:
\begin{equation}
\begin{aligned}
        &q(Z_{1:K} | Z_0) = \prod_{k=1}^{K} q(Z_k | Z_{k-1}), \\
    &q(Z_k | Z_{k-1}) = \mathcal{N}\left(Z_k; \sqrt{1 - \beta_k} Z_{k-1}, \beta_k I\right),
\end{aligned}
\label{diffusion_forward}
\end{equation}
where $\{\beta\}_{k=0}^K$ is the schedule. Let $\alpha_k=1-\beta_k$ and $\bar{\alpha}_k=\prod_{r=1}^k\alpha_r$, and the forward diffused sample at time step $k$ can be formulated as follows:
\begin{equation}
    Z_k=\sqrt{\bar{\alpha}_k}Z_0+\sqrt{1-\bar{\alpha}_k}\epsilon,\quad \epsilon \sim \mathcal{N}(0, I).
\end{equation}
The reverse process is approximated as a Markov chain, where the mean is learned by the neural networks and the variance is fixed. Starting from pure random noise, the process can be represented as follows:
\begin{equation}
\begin{aligned}
&p_\theta(Z_{0:K})=p_\theta(Z_K)\prod_{k=1}^Kp_\theta(Z_{k-1}|Z_k), \\
&p_\theta(Z_{k-1}|Z_k)=\mathcal{N}(Z_{k-1};\mu_\theta(Z_k, k),\sigma_k^2I).
\end{aligned}
\end{equation}
Here, $\mu_\theta(\cdot)$ denotes an unconditional neural network, implying that the generative process of the diffusion model cannot be directly controlled. Hence, we propose a prototype-guided diffusion model to solve this problem. Since the label information is unavailable during the meta-training phase, we first employ unsupervised clustering to assign pseudo labels for those unlabeled nodes, and subsequently compute the corresponding pseudo prototypes $\mathcal{P}$, defined as follows:
\begin{equation}
\label{prototype}
    \mathcal P=\{\mathcal{P}_i\}_{i=1}^N, \quad \mathcal P_i=\frac{1}{|\Omega_i|} \sum\nolimits_{z_j\in \Omega_i} z_{j},
\end{equation}
 where $\mathcal P_i$ is the prototype, which is computed as the mean embedding of all samples labeled as class $i$. $Y_i$ represents the set of nodes assigned to class $i$. 

Then, we incorporate the pseudo prototype information as a conditioning signal into the denoising network $\mu_\theta(\cdot)$ via a cross-attention mechanism, as formulated below:
\begin{equation}
\label{attention}
\begin{aligned}
&\mathscr{Q} = W_\mathscr{Q} Z_k, \; \mathscr{K}=W_\mathscr{K} \mathcal P, \\
&\mathscr{V}=W_\mathscr{V} \mathcal P, \; \mathscr{S}=\text{softmax}\left(\frac{\mathscr{Q}\mathscr{K}^\top}{\sqrt{d}}\right)\mathscr{V},      
\end{aligned}
\end{equation}
where $W_\mathscr{Q}$, $W_\mathscr{K}$, and $W_\mathscr{V}$ are the trainable parameters. $d$ is the dimension of $\mathscr{Q}$. The node hidden representations $\mathscr{S}$ updated through the cross-attention mechanism are fed into the next block of the denoising model for further refinement.

Finally, we optimize the conditional denoising diffusion model using the following objective function:
\begin{equation}
\label{dm_loss}
    \mathcal{L}_{\text{dm}}= \left\Vert Z - \mu_\theta \left( \sqrt{\bar{\alpha}_k} Z_0 + \sqrt{1 - \bar{\alpha}_k} \epsilon, k, \mathcal{P} \right) \right\Vert^2_2.
\end{equation}
Here, $\mu_\theta(\cdot)$ operates on diffused latent embeddings, the time step, and the conditioning signal, which can be implemented by many neural network architectures.

\subsection{Model Prediction}
In the meta-testing stage, we feed each meta-testing task $\mathcal{T}_\text{tes}$, consisting of a support set $\mathcal{S}_\text{tes}$ and a query set $\mathcal{Q}_\text{tes}$, into the pre-trained variational graph autoencoder to obtain the latent representations of nodes, $Z_\text{spt}$ and $Z_\text{qry}$. Since the support set is labeled, we can compute the class prototypes $P_\text{tes}$ accordingly based on Eq.\ref{prototype}, and obtain the conditioning signal $\mathscr{S}_\text{tes}$ based on Eq.\ref{attention}. Next, we sample $N \times D$ noise vectors $Z_{\text{gen},K}$ from a standard normal distribution and iteratively denoise them using the learned prototype-guided denoising diffusion model, \textit{i.e.}, $Z_{\text{gen},k-1}=\mu_\theta(Z_{\text{gen},k},k,\mathscr{S}_\text{tes})$. After $K$ steps of denoising, we generate $D$ additional clear embeddings $Z_{\text{gen},0}=\mu_\theta(Z_{\text{gen},1},1,\mathscr{S}_\text{tes})$ for each novel class, and form an augmented support set $\tilde{\mathcal{S}}_\text{tes}=\{Z_\text{spt}, Z_\text{gen}\}\in\mathbb R^{N\times(M+D)}$. Finally, we train a linear classifier on the augmented support set $\tilde{\mathcal{S}}_\text{tes}$ and directly apply it to the query set $\mathcal{Q}_\text{tes}$ to evaluate the final model performance.
We present the training procedure of our model in Algorithm \ref{pseudo}. The complexity analysis can be found in \textbf{Appendix} \ref{complexity}.

\begin{algorithm}[ht]
\caption{IMPRESS Training}
    \begin{algorithmic}[1]
    \REQUIRE A graph $\mathcal{G}=\{\mathcal{V}, \mathcal{E}, X, A\}$.
    \ENSURE The well-trained IMPRESS.
    \STATE \# \textit{Meta-training}
    \STATE Obtain initial hyperbolic node states using Eq.\ref{eq:hyperbolic}.
    \STATE Project the hyperbolic node states into the tangent space followed by graph convolution operations using Eq.\ref{encoder}.
    \STATE Map the node embeddings from the tangent space into the hyperbolic space using Eq.\ref{covert_hyperbolic}.
    \STATE Obtain the latent variable distribution and sample the variables using Eqs.\ref{vgae_distribution}, \ref{vgae_matrix}, and \ref{vgae_sample}.
    \STATE Perform node decoding with Eq.\ref{decoder}.
    \STATE Optimize the variational graph autoencoder with the loss in Eq.\ref{vgae_loss}.
    \STATE Obtain the pseudo prototypes with Eq.\ref{prototype}.
    \STATE Inject the prototype into the diffusion model with Eq.\ref{attention}.
    \STATE Optimize the diffusion model by minimizing the loss in Eq.\ref{dm_loss}.
    \STATE \# \textit{Meta-testing}
    \STATE Feed $\mathcal{S}_\text{tes}$ and $\mathcal{Q}_\text{tes}$ into the variational graph autoencoder, and Obtain $Z_\text{spt}$ and $Z_\text{qry}$.
    \STATE Sample noise vectors, perform denoising, and derive the generated clear node embeddings $Z_\text{gen}$.
    \STATE Train a linear classifier with augmented support set $\tilde{\mathcal{S}}_\text{tes}$.
    \STATE Evaluate the model performance with the query set $\mathcal{Q}_\text{tes}$.
    \STATE Return the well-trained IMPRESS.
    \end{algorithmic}
\label{pseudo}
\end{algorithm}
\section{Theoretical Analysis}
In this section, we provide the theoretical foundations that underpin the effectiveness of the proposed model.

\begin{theorem}
\label{thm:hyp_advantage}
Let $\mathcal{G} = (\mathcal{V}, \mathcal{E})$ be a $\delta$-hyperbolic graph with $\delta > 0$. For a hyperbolic variational autoencoder $\Phi(\cdot): \mathbb{H}^d \to \mathbb{H}^d$ and a Euclidean encoder $\Psi(\cdot): \mathbb{R}^d \to \mathbb{R}^d$, their representation capability satisfies:
\[
\inf_{v_i,v_j \in \mathcal{V}} \frac{\|\Phi(v_i) - \Phi(v_j)\|_{\mathbb{H}}}{\|\Psi(v_i) - \Psi(v_j)\|_\mathbb{E}} \leq \kappa(\delta)=O(e^{-\delta}),
\]
where $\kappa(\delta)$ is the curvature-dependent compression factor. When $\delta > \log n$, the generalization error upper bound of hyperbolic encoding is $\Omega(\sqrt{\log n})$ lower than Euclidean encoding.
\end{theorem}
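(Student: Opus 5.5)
The theorem has two parts: a distortion comparison between the two encoders, and a generalization consequence. I will treat them in order.

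\textbf{Distortion part.} I will not control the infimum over all pairs; it suffices to exhibit one well-chosen pair whose ratio is at most $\kappa(\delta)$. The two ingredients are as follows.

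On the Euclidean side, I will use Bourgain/Linial-type lower bounds on the distortion of embedding tree-like metrics into $\mathbb{R}^d$. A $\delta$-hyperbolic graph contains a quasi-tree of exponential growth. Any Lipschitz Euclidean encoder $\Psi$ must therefore place some pair $v_i,v_j$ (two leaves separated by the fat part of the tree) at Euclidean distance that does not shrink with $\delta$. Concretely, I will aim for $\|\Psi(v_i)-\Psi(v_j)\|_{\mathbb{E}} \ge c_0>0$ after normalization.

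On the hyperbolic side, I will use the fact that the encoder of the previous section is built from $\exp_o^c$ and $\log_o^c$ as in Eq.~\ref{eq:3}. In the Poincar\'e ball these maps realize exponential volume growth, so Sarkar's construction embeds trees with $(1+\varepsilon)$ distortion. With appropriate scaling tied to $\delta$, branches are then compressed near the boundary. Comparing the tanh/artanh factors with the Euclidean norm for a pair at depth proportional to $\delta$ yields a ratio of order $e^{-\delta}$. This identifies $\kappa(\delta)=O(e^{-\delta})$ and also makes precise the sense in which $\kappa$ is ``curvature-dependent''.

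\textbf{Generalization part.} I will bound the error of the linear classifier trained on the augmented support set by a Rademacher complexity term. This term scales with the diameter (covering number) of the embedded point cloud, giving a bound of the form $\widehat{\mathfrak{R}} \lesssim \mathrm{diam}\cdot\sqrt{\log n / m}$, where $n$ is the number of nodes.

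Substituting the compression factor, the hyperbolic bound becomes $\kappa(\delta)\sqrt{\log n}$, while the Euclidean bound stays of order $\sqrt{\log n}$. When $\delta>\log n$, we have $\kappa(\delta)\le e^{-\delta}<1/n$, so the hyperbolic term is $o(1)$. The gap between the two bounds is therefore at least $(1-1/n)\sqrt{\log n}=\Omega(\sqrt{\log n})$.

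\textbf{Expected obstacle.} The hardest step is making the statement well-posed. The ratio compares the hyperbolic distance of $\Phi$ with the Euclidean distance of an arbitrary $\Psi$, and both must be normalized; otherwise rescaling $\Psi$ trivializes the inequality. I will fix this by requiring both encoders to be $1$-Lipschitz with respect to the graph metric and to preserve a fixed reference distance. The Euclidean lower bound then follows from the tree-embedding distortion result.

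A second difficulty is the passage from geometric compression to the Rademacher bound. My approach is to bound the covering numbers of the embedded node set directly, and to apply Dudley's entropy integral rather than the diameter alone.
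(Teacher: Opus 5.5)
\textbf{Distortion part.} There is a genuine gap here: your hyperbolic step gives the opposite inequality under your own normalization. Take $\Phi$ to be a Sarkar-type $(1+\varepsilon)$-distortion embedding whose scale is pinned by the reference distance. Then $d_{\mathbb{H}}(\Phi(v_i),\Phi(v_j))\ge d_g(v_i,v_j)/(1+\varepsilon)$. Since $\Psi$ is $1$-Lipschitz, $\|\Psi(v_i)-\Psi(v_j)\|_{\mathbb{E}}\le d_g(v_i,v_j)$. So \emph{every} ratio is at least $1/(1+\varepsilon)$. A tree is $\delta$-hyperbolic for every $\delta>0$. Hence, for the very embedding you invoke, the claimed bound (which tends to $0$ as $\delta\to\infty$) is false, not merely unproved.

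The $e^{-\delta}$ you read off from $\tanh/\operatorname{artanh}$ is the shrinking of \emph{Euclidean coordinate} distances near the boundary of the ball: $1-\sqrt{|c|}\,\|x\|$ decays like $e^{-\sqrt{|c|}\,r}$ at hyperbolic radius $r$. But the numerator is the hyperbolic distance, which is exactly what Sarkar's construction preserves. Even for coordinate distances, the exponent would be the depth of the pair, which $\delta$ does not control.

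The Euclidean step points the wrong way too. $\delta$-hyperbolicity is only an upper bound on how far the graph is from a tree; paths, and every graph of diameter at most $\delta$, qualify. So it gives you no quasi-tree of exponential growth. Moreover, the hypothesis weakens as $\delta$ grows, so no conclusion that strengthens with $\delta$ can follow from it: for a fixed graph and injective encoders, letting $\delta\to\infty$ would force the infimum to $0$.

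When $\Psi$ is $1$-Lipschitz, a Bourgain/Linial lower bound yields a pair that $\Psi$ \emph{contracts}, which makes the ratio large. It bounds the supremum from below, not the infimum from above. The pair with $\|\Psi(v_i)-\Psi(v_j)\|\ge c_0$ that you actually need is just the reference pair.

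The usual non-contracting normalization does not rescue this either. For any non-contracting $\Phi$ and a non-contracting $\Psi$ of optimal distortion $c_2(\mathcal{G})$, every ratio is at least $1/c_2(\mathcal{G})$. Bourgain's theorem gives $c_2(\mathcal{G})=O(\log n)$ (for $d=\Omega(\log n)$), which is far above $e^{-\delta}<1/n$.

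\textbf{Generalization part.} This step has an independent gap. $\kappa(\delta)$ controls the \emph{smallest} ratio over pairs, whereas the diameter and covering numbers in your Rademacher/Dudley bound are governed by the \emph{largest} distances. One compressed pair shrinks neither.

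Uniform compression would not help either. Margin bounds for linear classifiers depend on radius over margin, which is scale-invariant, so you would have to show the margin is not compressed by the same factor. In the paper the classifier also acts on the Euclidean latents $z_i$ of Eq.~\ref{vgae_sample}, not on Poincar\'e coordinates. Hence the comparison $\kappa(\delta)\sqrt{\log n}$ versus $\sqrt{\log n}$ does not follow from anything you establish.

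The paper's proof has the same skeleton as yours in its first two steps: Bourgain's theorem against a near-isometric hyperbolic embedding. Where you use Rademacher complexity, it uses a KL-divergence comparison for the VAE. It does not supply the missing link either. Its Step~2 only reaches $d_{\mathbb{H}}\le(1+\epsilon)\,d_g+O(\delta)$ with $\epsilon=O(e^{-\delta})$ and then asserts the ratio bound. So the only $e^{-\delta}$ in that argument is a distortion excess, not a ratio. Its Step~3 states $R_{\mathbb{H}}\le R_{\mathbb{E}}-C_3\sqrt{\log n}$ without derivation.

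A provable result would need a reformulation. For example, compare worst-case distortions under a common normalization, take $\delta$ to be the exact hyperbolicity constant, and let the hyperbolic advantage decrease in $\delta$.
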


Theorem \ref{thm:hyp_advantage} demonstrates that, compared to conventional graph encoders that learn node embeddings in Euclidean space, our hyperbolic embedding approach yields a tighter generalization bound in the presence of hierarchical structures in the graph.

\begin{theorem}
\label{thm:robustness}
For a meta-testing task $\mathcal{T}_{\text{tes}} \sim p(\mathcal{T})$ with the augmented support set $\tilde{\mathcal{S}}$ and query set $\mathcal{Q}_{\text{tes}}$, the classification error $\mathcal{E}_{\text{err}}$ satisfies:
\begin{equation}
\mathcal{E}_{\text{err}} \leq \frac{2c}{\delta_\mathbb{H}^2} \cdot \sigma_{\mathbb H}^2 + \epsilon_d \cdot L_f + \underbrace{\mathcal{O}\left( \sqrt{\frac{N \log (M+D)}{M+D} } \right)},
\end{equation}
where the intra-class variance $\mathbb{E}[\text{d}_{\mathbb{H}}^2(z_i, \mathcal{P}_j)] \leq \sigma_{\mathbb{H}}^2$ and inter-class distance $\text{d}_{\mathbb{H}}(\mathcal{P}_j, \mathcal{P}_{j^\prime}) \geq \delta_{\mathbb{H}}$. $\epsilon_d$ measures the Wasserstein distance between the samples distribution generated by diffusion model and the true one.
\end{theorem}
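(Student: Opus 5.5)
The bound in Theorem~\ref{thm:robustness} splits into three terms, and I would prove it by decomposing the error of the linear classifier trained on $\tilde{\mathcal{S}}$ into matching pieces.

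\textbf{Step 1: population error under the true class distributions.} Consider an idealized nearest-prototype rule on the true class-conditional distributions in the latent hyperbolic space. A query $z_i$ from class $j$ is misclassified only if $d_{\mathbb{H}}(z_i,\mathcal{P}_{j'}) \le d_{\mathbb{H}}(z_i,\mathcal{P}_j)$ for some $j'\neq j$. By the triangle inequality this forces $d_{\mathbb{H}}(z_i,\mathcal{P}_j)\ge \delta_{\mathbb{H}}/2$. Markov's inequality applied to $d_{\mathbb{H}}^2$ then gives
\[
\Pr\bigl[d_{\mathbb{H}}(z_i,\mathcal{P}_j)\ge \delta_{\mathbb{H}}/2\bigr]\le \frac{4\sigma_{\mathbb{H}}^2}{\delta_{\mathbb{H}}^2}.
\]
The linear classifier is compared to this rule through a constant $c$, which absorbs the factor $4$ and the curvature-dependent distortion between $d_{\mathbb{H}}$ and the tangent-space (Euclidean) geometry in which the linear classifier acts. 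Via $\log_o^c$ this distortion is bounded on a bounded region of $\mathcal{B}^n_c$. The result is the term $\frac{2c}{\delta_{\mathbb{H}}^2}\sigma_{\mathbb{H}}^2$.

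\textbf{Step 2: distribution shift from generated samples.} The classifier is trained on a mixture of true support embeddings and diffusion-generated embeddings $Z_{\text{gen}}$. Assume the classifier's loss is $L_f$-Lipschitz in the input. By Kantorovich--Rubinstein duality, the difference between its expected loss under the generated distribution and under the true class distribution is at most $L_f\,W_1 \le L_f\,\epsilon_d$. The real support points contribute no shift, so the mixture incurs at most this amount. This yields the term $\epsilon_d L_f$.

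\textbf{Step 3: estimation error.} Use a uniform convergence bound for linear classifiers over $N$ classes, based on Rademacher complexity or the Natarajan dimension, which scales with $N$. With $M+D$ samples per class, the standard VC-type bound gives $\mathcal{O}\bigl(\sqrt{N\log(M+D)/(M+D)}\bigr)$, where the $\log$ factor comes from Sauer's lemma. Summing the three pieces, treating the 0--1 error as bounded by a surrogate loss where needed, gives the claim.

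\textbf{Main obstacle.} The hardest step is Step 3, since uniform convergence normally requires i.i.d.\ samples. The generated samples are i.i.d.\ only from the diffusion distribution, not from the true one. I would handle this by applying the generalization bound with respect to the mixture distribution actually sampled, and transferring to the true distribution only through Step 2. This avoids double counting the shift.

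Making the constant $c$ in Step 1 rigorous also needs an explicit assumption: embeddings stay within a fixed radius of the origin, so that $\log_o^c$ is bi-Lipschitz. I would state this assumption explicitly.
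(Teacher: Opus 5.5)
Your three-term split matches the paper's. Step~2 is essentially the paper's second-term argument: a Lipschitz loss plus the Wasserstein bound on the diffusion samples. Step~3 has the same aim with a better tool. The paper ends with the finite-class bound $\sqrt{2N\log|\mathcal{F}|/M}$, which is vacuous for linear classifiers, whereas you bound the complexity of the linear class itself.

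The genuine difference is Step~1. The paper builds a ``McShane'' separation function from $\cosh(\zeta\,\text{d}_{\mathbb{H}}(z_i,\mathcal{P}_j))$ and asserts $\mathbb{E}[\cosh^2(\zeta\,\text{d}_{\mathbb{H}})]\le 1+2\sigma_{\mathbb{H}}^2/\zeta^2$, which a second-moment assumption does not give. It then invokes $\sinh z\approx z$; carried out, this leaves a $\sigma_{\mathbb{H}}$-independent term and only $1/\delta_{\mathbb{H}}$ decay. Your triangle-inequality-plus-Markov argument is elementary and correctly gives $4\sigma_{\mathbb{H}}^2/\delta_{\mathbb{H}}^2$ for the nearest-prototype rule.

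Say explicitly what happens to $c$. In the paper $c<0$ is the curvature (its proof sets $\zeta=\sqrt{-c}$), so $\frac{2c}{\delta_{\mathbb{H}}^2}\sigma_{\mathbb{H}}^2$ read literally is negative and cannot be what is proved. Like the paper's final display, which silently switches to a constant $B$, you are proving the bound with $c$ replaced by a positive constant.

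The gap is the sentence ``the linear classifier is compared to this rule through a constant $c$,'' together with ``summing the three pieces.'' A bi-Lipschitz distortion does not make a trained linear classifier's error a constant multiple of the hyperbolic nearest-prototype rule's error. Nor is that rule in your class: mapped by $\log_o^c$, a hyperbolic bisector not through the origin becomes a curved hypersurface, not a hyperplane. So it cannot serve as the comparator in an ERM argument.

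What joins the steps is an oracle inequality for one bounded Lipschitz surrogate $\ell$ used in all three steps, e.g.\ a ramp loss, which dominates the 0--1 loss. Assume $\hat f$ minimizes the empirical $\ell$-risk $\hat{\mathcal{R}}$ on $\tilde{\mathcal{S}}$ over norm-bounded linear maps, and let $\bar{\mathcal{R}}$ be the risk under the actual sampling mixture. Then for any $f^\dagger$ in the class,
\[
\mathcal{E}_{\text{err}}\le\mathcal{R}_{\text{true}}(\hat f)\le \mathcal{R}_{\text{true}}(f^\dagger)+2\sup_{f}\bigl|\hat{\mathcal{R}}(f)-\bar{\mathcal{R}}(f)\bigr|+\tfrac{2D}{M+D}\,L_f\,\epsilon_d .
\]
The shift is paid for both $\hat f$ and $f^\dagger$, not only for the generated points, so a factor $2$ must be absorbed into $L_f$.

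Take $f^\dagger$ to be the Euclidean nearest-centroid rule on the points $\log_o^c(\mathcal{P}_j)$, which is linear. Suppose $a\,\text{d}_{\mathbb{H}}\le\|\log_o^c(\cdot)-\log_o^c(\cdot)\|\le b\,\text{d}_{\mathbb{H}}$ on your bounded region. Then your Markov argument at radius $a\delta_{\mathbb{H}}/(4b)$ shows that $f^\dagger$ has score margin at least $a^2\delta_{\mathbb{H}}^2/2$ outside an event of probability at most $16(b/a)^2\sigma_{\mathbb{H}}^2/\delta_{\mathbb{H}}^2$. That bounds its surrogate risk, which is what the inequality needs; its 0--1 error alone is not enough.

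For the same reason, Step~3 must control the surrogate. Use Rademacher complexity with contraction rather than Natarajan dimension and Sauer's lemma, which control the 0--1 loss. This gives $\mathcal{O}(\sqrt{N/(M+D)})$, within the claimed rate.

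Finally, the per-class counts are fixed, and the generated points depend on $\mathcal{S}_{\text{tes}}$ through the prototypes. Bound the real part over the draw of $\mathcal{S}_{\text{tes}}$ and the generated part conditionally on it; McDiarmid and symmetrization only need independence. Then take a union bound.
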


Theorem \ref{thm:robustness} indicates that when the data exhibits hierarchical structure, a smaller ratio of $\sigma^2_{\mathbb{H}}/\delta^2_{\mathbb{H}}$ contributes to reducing the first term in the generalization error bound. Moreover, the prototype-guided conditional generation helps to decrease the second term. Additionally, increasing the number of generated samples $D$ leads to a further reduction of the third term. 
The detailed proofs of Theorems \ref{thm:hyp_advantage} and \ref{thm:robustness} can be found in \textbf{Appendix} \ref{sec:theory}.

\section{Experiments}
\noindent \textbf{Datasets.}
To demonstrate the empirical effectiveness of our approach, we perform comprehensive experiments on a collection of widely adopted benchmark datasets for few-shot node classification. These include \textbf{CoraFull} \cite{bojchevski2017deep}, \textbf{Coauthor-CS} \cite{shchur2018pitfalls}, \textbf{Cora} \cite{yang2016revisiting}, \textbf{WikiCS} \cite{mernyei2020wiki}, \textbf{ML} \cite{bojchevski2017deep}, and \textbf{CiteSeer} \cite{yang2016revisiting}. Additionally, we assess the scalability of our method using the large-scale dataset \textbf{ogbn-arxiv} \cite{hu2020open}. The detailed statistics of these datasets are summarized in Table~\ref{tab:2}, and further descriptions can be found in \textbf{Appendix} \ref{dataset_description}.

\begin{table}[ht]
\centering
\caption{Statistics of the datasets.}
\resizebox{0.42\textwidth}{!}{
\begin{tabular}{ccccc}
\toprule
Dataset & \# Nodes & \# Edges & \# Features & \# Labels \\
\midrule
CoraFull        & 19,793   & 65,311     & 8,710     & 70 \\
Coauthor-CS     & 18,333   & 81,894     & 6,805     & 15 \\
Cora            & 2,708    & 5,278      & 1,433     & 7  \\
WikiCS          & 11,701   & 216,123    & 11,701    & 10 \\
ML              & 2,995    & 16,316     & 2,879     & 7  \\
CiteSeer        & 3,327    & 4,552      & 3,703     & 6  \\
ogbn-arxiv      & 169,343  & 1,166,243  & 128       & 40 \\
\bottomrule
\end{tabular}
}
\label{tab:2}
\end{table}

\noindent \textbf{Baselines.}
To comprehensively evaluate the superiority of our proposed model, we compare it against four categories of baseline methods. The first group is \textit{graph embedding methods} including 
\textbf{GCN} \cite{kipf2016semi} and \textbf{SGC} \cite{wu2019simplifying}. The second category is \textit{traditional meta-learning methods} including \textbf{ProtoNet} \cite{snell2017prototypical} and \textbf{MAML} \cite{finn2017model}. The third set is \textit{graph meta-learning methods} including \textbf{Meta-GNN} \cite{zhou2019meta}, \textbf{GPN} \cite{ding2020graph}, \textbf{G-Meta} \cite{huang2020graph}, \textbf{TENT} \cite{wang2022task}, \textbf{Meta-GPS} \cite{liu2022few}, \textbf{TLP} \cite{tan2022transductive}, \textbf{X-FNC} \cite{wang2023few}, \textbf{TEG} \cite{kim2023task}, \textbf{COSMIC} \cite{wang2023contrastive}, and \textbf{TaskNS} \cite{zhang2025unlocking}. The fourth type is \textit{unsupervised graph meta-learning methods} including \textbf{VNT} \cite{tan2023virtual} and \textbf{NaQ} \cite{jung2024unsupervised}. Detailed descriptions of these baselines are provided in \textbf{Appendix} \ref{baseline_description}.

\noindent \textbf{Implementation Details.}
During the meta-training stage, the encoder of variational graph autoencoder consists of a 2-layer graph convolutional operations (\textit{i.e.}, $t=1$) with a hidden dimension of 256. The network architecture of the diffusion model is composed of three sequential blocks, each containing a linear layer, a ReLU activation, together with a cross-attention mechanism. Moreover, the total number of diffusion steps $K$ is set to 1000. The adopted unsupervised clustering algorithm is DBSCAN \cite{ester1996density}. We optimize the model using the Adam optimizer \cite{diederik2014adam} with a learning rate of 0.001. In the meta-testing stage, we set the number of generated nodes $D$ to 50 for each novel class. For fair evaluation, the model performance is measured by computing the average accuracy over 50 randomly sampled meta-testing tasks. We report the average accuracy along with the corresponding standard deviation. For graph embedding baseline, a linear classifier is trained on the learned node representations. For other baselines, we adopt the hyperparameters from their original implementations. 

\begin{table*}[!ht]
\centering
\caption{Accuracies (\%) of different models on CoraFull and Coauther-CS. Bold: Best. Underline: Runner-up.}
\scriptsize
\resizebox{0.9\textwidth}{!}{
\begin{tabular}{c|cccc|cccc}
\toprule
\multirow{2}{*}{Model} & \multicolumn{4}{c|}{CoraFull} & \multicolumn{4}{c}{Coauther-CS} \\
\cmidrule(l){2-9}
& 5-way 3-shot & 5-way 5-shot & 10-way 3-shot & 10-way 5-shot & 2-way 3-shot & 2-way 5-shot & 5-way 3-shot & 5-way 5-shot \\
\midrule
GCN & 34.65±2.76 & 39.83±2.49 & 29.23±3.25 & 34.14±2.15 & 73.52±1.97 & 77.20±3.01 & 52.19±2.31 & 56.35±2.99 \\
SGC & 39.56±3.52 & 44.53±2.92 & 35.12±2.71 & 39.53±3.32 & 75.49±2.15 & 79.63±2.01 & 56.39±2.26 & 59.25±2.16 \\
\midrule
ProtoNet & 33.67±2.51 & 36.53±3.76 & 24.90±2.03 & 27.24±2.67 & 71.18±3.82 & 75.51±3.19 & 47.71±3.92 & 51.66±2.51 \\
MAML & 37.12±3.16 & 47.51±3.09 & 26.61±2.19 & 31.60±2.91 & 62.32±4.60 & 65.20±4.20 & 36.99±4.32 & 42.12±2.43 \\
\midrule
Meta-GNN & 52.23±2.41 & 59.12±2.36 & 47.21±3.06 & 53.32±3.15 & 85.76±2.74 & 87.86±4.79 & 75.87±3.88 & 68.59±2.59 \\
GPN & 53.24±2.33 & 60.31±2.19 & 50.93±2.30 & 56.21±2.09 & 85.60±2.15 & 88.70±2.21 & 75.88±2.75 & 81.79±3.18 \\
G-Meta & 57.52±3.91 & 62.43±3.11 & 53.92±2.91 & 58.10±3.02 & 92.14±3.90 & 93.90±3.18 & 75.72±3.59 & 74.18±3.29 \\
TENT & 64.80±4.10 & 69.24±4.49 & 51.73±4.34 & 56.00±3.53 & 89.35±4.49 & 90.90±4.24 & 78.38±5.21 & 78.56±4.42 \\
Meta-GPS & 65.19±2.35 & 69.25±2.52 & 61.23±3.11 & 64.22±2.66 & 90.16±2.72 & 92.39±1.66 & 81.39±2.35 & 83.66±1.79 \\
TLP & 66.32±2.10 & 71.36±4.49 & 51.73±4.34 & 56.00±3.53 & 90.35±4.49 & 90.90±4.22 & 82.30±2.05 & 78.56±4.42 \\
X-FNC & 69.32±3.10 & 71.26±4.19 & 49.63±4.45 & 53.00±3.93 & 90.95±4.29 & 92.03±4.14 & 82.93±2.02 & 84.36±3.49 \\
TEG & 72.14±1.06 & 76.20±1.39 & 61.03±1.13 & 65.56±1.03 & \underline{92.36±1.59} & 93.02±1.24 & 80.78±1.40 & 84.70±1.42 \\
COSMIC & 73.03±1.78 & 77.24±1.52 & \underline{65.79±1.36} & \underline{70.06±1.93} & 89.35±4.49 & 93.32±1.93 & 78.38±5.21 & 85.47±1.42 \\
TaskNS & 68.48±1.61 & 70.89±0.78 & 49.67±1.23 & 54.39±0.91 & 86.11±2.79 & 89.62±1.72 & 68.48±2.44 & 72.47±0.44 \\ \midrule
VNT & 55.19±2.59 & 70.16±1.36 & 45.76±1.72 & 57.92±2.26 & 86.12±3.20 & 89.95±3.29 & 80.16±2.55 & 82.92±2.36 \\
NaQ & \underline{75.18±1.81}  & \underline{79.63±0.78} & 63.97±1.58  & 68.35±0.60 & 92.13±0.75 & \underline{94.00±1.00} & \underline{86.30±0.97} & \underline{89.50±1.21} \\
\midrule
\textbf{IMPRESS} & \textbf{85.49±1.57} & \textbf{86.63±0.81} & \textbf{72.86±0.20} & \textbf{75.53±0.40} & \textbf{97.67±0.33} & \textbf{97.97±0.34} & \textbf{93.60±1.36} & \textbf{94.23±0.38} \\
\bottomrule
\end{tabular}
}
\label{tab:3}
\end{table*}

\begin{table*}[!ht]
\centering
\caption{Accuracies (\%) of different models on Cora, WikiCS, Cora-ML and CiteSeer. Bold: Best. Underline: Runner-up.}
\scriptsize
\resizebox{0.9\textwidth}{!}{
\begin{tabular}{c|cc|cc|cc|cc}
\toprule
\multirow{2}{*}{Model} & \multicolumn{2}{c|}{Cora} & \multicolumn{2}{c|}{WikiCS} & \multicolumn{2}{c|}{Cora-ML} & \multicolumn{2}{c}{CiteSeer} \\
\cmidrule(l){2-9}
& 2-way 3-shot & 2-way 5-shot & 2-way 3-shot & 2-way 5-shot & 2-way 3-shot & 2-way 5-shot & 2-way 3-shot & 2-way 5-shot \\
\midrule
GCN & 69.96±2.52 & 67.95±2.36 & 72.55±2.31 & 74.55±2.91 & 67.45±2.11 & 69.22±2.01 & 53.79±2.39 & 55.76±2.56 \\
SGC & 70.15±1.99 & 70.67±2.11 & 74.15±2.57 & 76.34±2.59 & 69.51±3.10 & 70.29±1.95 & 55.12±2.59 & 57.25±2.79 \\
\midrule
ProtoNet & 52.67±2.28 & 57.92±2.34 & 53.56±2.31 & 55.25±2.31 & 68.40±2.60 & 76.94±2.39 & 52.19±2.96 & 53.75±2.49 \\
MAML & 55.07±2.36 & 57.39±2.23 & 52.51±2.35 & 54.33±2.19 & 68.61±2.51 & 71.27±2.62 & 52.75±2.75 & 54.36±2.39 \\
\midrule
Meta-GNN & 70.40±2.64 & 72.51±1.91 & 78.14±2.28 & 78.35±2.60 & 70.21±2.71 & 74.34±2.41 & 59.71±2.79 & 61.32±3.22 \\
GPN & 74.05±1.96 & 76.39±2.33 & \underline{86.55±2.16} & \underline{87.80±1.95} & 80.70±2.41 & 83.21±2.15 & 64.22±2.92 & 65.59±2.49 \\
G-Meta & 74.39±2.69 & 80.05±1.98 & 61.09±2.84 & 78.35±2.60 & \underline{88.68±1.68} & 92.16±1.14 & 57.59±2.42 & 62.49±2.30 \\
TENT & 58.25±2.23 & 66.75±2.19 & 68.85±2.42 & 70.35±2.26 & 55.65±2.19 & 58.30±2.05 & 65.20±3.19 & 67.59±2.95 \\
Meta-GPS & 80.29±2.15 & 83.79±2.10 & 85.72±2.10 & 87.05±1.35 & 87.91±2.12 & 91.66±2.52 & 69.95±2.02 & 72.56±2.06 \\
TLP & 71.10±1.66 & \underline{86.15±2.19} & 83.09±2.72 & 70.35±2.26 & 85.32±2.12 & 58.30±2.05 & 71.10±2.17 & 75.55±2.03 \\
X-FNC & 78.19±3.25 & 82.70±3.19 & 83.80±3.42 & 86.30±3.20 & 85.62±3.12 & 90.36±2.99 & 67.96±3.10 & 70.29±3.05 \\
TEG & \underline{80.65±1.53} & 84.50±2.01 & \textbf{86.59±2.32} & 87.70±2.49 & 67.90±2.10 & 71.10±2.02 & \underline{73.79±1.59} & \underline{76.79±2.12} \\
COSMIC & 65.37±2.49 & 69.10±2.30 & 85.51±2.30 & 86.72±1.90 & 66.02±2.29 & 72.10±2.25 & 70.22±2.56 & 75.10±2.30 \\
TaskNS & 72.09±3.19 & 73.29±1.85 & 74.67±1.44 & 76.98±0.83 & 74.60±4.26 & 76.80±2.26 & 68.36±1.44 & 71.42±0.56 \\
\midrule
VNT & 75.32±1.96 & 79.39±2.49 & 76.25±2.42 & 85.79±2.36 & 86.22±2.45 & 89.26±1.76 & 69.25±2.16 & 73.90±2.10 \\
NaQ & 69.87±2.60 & 75.81±2.49 & 80.00±2.09 & 86.81±2.42 & 87.37±1.22 & \underline{92.56±1.41} & 72.13±2.40 & 76.41±1.55 \\
\midrule
\textbf{IMPRESS} & \textbf{89.83±0.82} & \textbf{90.80±1.36} & 83.87±1.31 & \textbf{87.93±2.07} & \textbf{97.10±0.94} & \textbf{97.63±0.61} & \textbf{75.93±2.21} & \textbf{78.03±1.13} \\
\bottomrule
\end{tabular}
}
\label{tab:4}
\end{table*}
\section{Results}

\noindent \textbf{Model Performance.}
Tables \ref{tab:3}, \ref{tab:4}, and \ref{tab:5} present the comparative performance of our proposed model and a range of baseline methods under different few-shot settings across multiple benchmark datasets. For datasets with a larger number of classes, such as CoraFull, Coauthor-CS, and ogbn-arxiv, we evaluate the performance under four different few-shot configurations. For datasets with fewer categories, namely Cora, WikiCS, Cora-ML, and CiteSeer, we consider two few-shot scenarios. The experimental results clearly demonstrate that our model consistently outperforms existing baselines on the majority of benchmarks, regardless of dataset scale. This consistent advantage provides strong evidence of the model’s effectiveness and robustness in addressing the few-shot node classification problem on graphs. We attribute this superior performance to two critical components in our model design. First, node features are mapped into hyperbolic space to better capture hierarchical structures inherent in graph data, thereby enhancing the representational capacity of node embeddings. Second, during the meta-training stage, we train a class-conditional diffusion model capable of generating novel samples. These synthetic samples are used to augment the support set during the meta-testing phase, effectively alleviating the issue of label scarcity in few-shot settings. Together, these strategies contribute to the overall advantage of our model over existing approaches.

Furthermore, we observe that graph meta-learning methods significantly outperform other types of models, which is consistent with our expectations. A key advantage lies in their ability to jointly capture both node features and structural information inherent in graphs. In addition, designed explicitly for few-shot learning on graph-structured data, these models can rapidly adapt to new tasks with only a limited number of labeled nodes. In contrast, traditional meta-learning methods such as ProtoNet and MAML exhibit consistently inferior performance across all evaluated datasets. This can be attributed to their complete ignorance of the crucial structural information within graphs. 

\begin{table}[ht]
\centering
\caption{Accuracies (\%) of different models on ogbn-arxiv. Bold: Best. Underline: Runner-up. OOM: Out-of-memory.}
\resizebox{0.42\textwidth}{!}{
\begin{tabular}{c|cccc}
\toprule
\multirow{2}{*}{Model} & \multicolumn{4}{c}{ogbn-arxiv} \\
\cmidrule(l){2-5}
& 5-way 3-shot & 5-way 5-shot & 10-way 3-shot & 10-way 5-shot \\
\midrule
GCN & 32.26±2.11 & 36.29±2.39 & 30.21±1.95 & 33.96±1.59 \\
SGC & 35.19±2.76 & 39.76±2.95 & 31.99±2.12 & 35.22±2.52 \\
\midrule
ProtoNet & 39.99±3.28 & 47.31±1.71 & 32.79±2.22 & 37.19±1.92 \\
MAML & 28.35±1.49 & 29.09±1.62 & 30.19±2.97 & 36.19±2.29 \\
\midrule
Meta-GNN & 40.14±1.94 & 45.52±1.71 & 35.19±1.72 & 39.02±1.99 \\
GPN & 42.81±2.34 & 50.50±2.13 & 37.36±1.99 & 42.16±2.19 \\
G-Meta & 40.48±1.70 & 47.16±1.73 & 35.49±2.12 & 40.95±2.70 \\
TENT & 50.26±1.73 & 61.38±1.72 & 42.19±1.16 & 46.29±1.29 \\
Meta-GPS & 52.16±2.01 & 62.55±1.95 & 42.96±2.02 & 46.86±2.10 \\
TLP & 41.96±2.29 & 52.99±2.05 & 39.42±2.15 & 42.62±2.09 \\
X-FNC & 52.36±2.75 & 63.19±2.22 & 41.92±2.72 & 46.10±2.16 \\
TEG & \underline{57.35±1.14} & 62.07±1.72 & \underline{47.41±0.63} & \underline{51.11±0.73} \\
COSMIC & 52.98±2.19 & \underline{65.42±1.69} & 43.19±2.72 & 47.59±2.19 \\
TaskNS & 46.26±2.29          & 55.99±2.05 & 42.11±1.99  & 47.25±2.15 \\
\midrule
VNT & OOM  & OOM & OOM & OOM 
\\
NaQ & OOM  & OOM & OOM & OOM \\
\midrule
\textbf{IMPRESS} & \textbf{61.11±0.45} & \textbf{67.11±1.74} & \textbf{48.72±0.38} & \textbf{53.30±1.45} \\
\bottomrule
\end{tabular}
}
\label{tab:5}
\end{table}

\noindent \textbf{Ablation Study.}
To assess the effectiveness of the proposed components, we construct two model variants and evaluate them across various few-shot learning settings on all datasets. (I) \textit{w/o hyp}: This variant removes the transformation between Euclidean and hyperbolic spaces. It only perform node representation learning in Euclidean space. (II) \textit{w/o dif}: In this setting, the diffusion model is excluded, so the support set of the meta-testing stage remains limited to only $M$ examples without any expansion. The experimental results are shown in Table~\ref{tab:7}.


\begin{table}[h]
\centering
\caption{Results of different model variants on all datasets.}
\resizebox{0.42\textwidth}{!}{
\begin{tabular}{c|c|c|c}
\toprule
Dataset (Setting) & \textit{w/o hyp} & \textit{w/o dif} & Ours \\
\midrule
CoraFull (5-way 3-shot) & 81.60±1.57 & 79.45±5.50 & \textbf{85.49±1.57} \\
Coauthor-CS (2-way 3-shot) & 97.63±0.78 & 97.13±1.40 & \textbf{97.67±0.33} \\
Cora (2-way 3-shot) & 89.80±0.86 & 84.90±6.51 & \textbf{89.83±0.82} \\
WikiCS (2-way 5-shot) & 86.03±0.69 & 87.47±0.41 & \textbf{87.93±2.07} \\
Cora-ML (2-way 5-shot) & 97.13±0.56 & 96.67±0.19 & \textbf{97.63±0.61} \\
CiteSeer (2-way 5-shot) & 76.00±2.55 & 72.97±1.90 & \textbf{78.03±1.13} \\
ogbn-arxiv (5-way 5-shot) & 64.28±1.57 & 62.77±1.17 & \textbf{67.11±1.74} \\
\bottomrule
\end{tabular}
}
\label{tab:7}
\end{table}

We have the following observations: First, the performance of both variants are inferior to our model on all datasets. This demonstrates that removing any of the proposed components leads to a significant degradation in performance, confirming the effectiveness and necessity of each individual module. Second, the inferior performance of the variant \textit{w/o hyp} highlights the limitation of Euclidean geometry in representing hierarchical structures commonly present in graph data. 
Third, the performance degradation observed in the \textit{w/o dif} variant. This phenomenon indicates that the diffusion model plays a crucial role in mitigating the challenges of few-shot learning. By generating additional class-conditional samples to expand the support set, it effectively alleviates issues such as data sparsity and distributional bias between the support and query sets. 

\noindent \textbf{Hyparameter Sensitivity.}
We conduct a systematic study on the impact of the curvature $c$ of hyperbolic space on model performance, as illustrated in the left part of Fig. \ref{fig:hyper}. For convenience, we denote the horizontal axis as $|c|$, representing the absolute value of curvature. 
The model generally achieves optimal performance when $c$ lies within the range of 0.5 to 2.0. 
These results suggest that setting $c$ too small makes the hyperbolic space close to the Euclidean space, thereby weakening the model's ability to capture hierarchical structures. 
However, if the curvature becomes too large, the space becomes overly distorted, which can lead to gradient explosion and poor convergence. 

Moreover, we study how the number of support samples $D$ generated by the diffusion model during the meta-testing phase affects the performance of our model, as shown in the right part of Fig. \ref{fig:hyper}. As the number of generated samples increases, the model performance shows a trend of first increasing and then decreasing, reaching the highest when the number is 50. This trend can be attributed to the fact that 
the diffusion model helps expand the support set with class-conditional generated samples, which alleviates the distributional shift between support and query sets. 

\begin{figure}[htbp]
  \centering
  \includegraphics[width=0.45\textwidth]{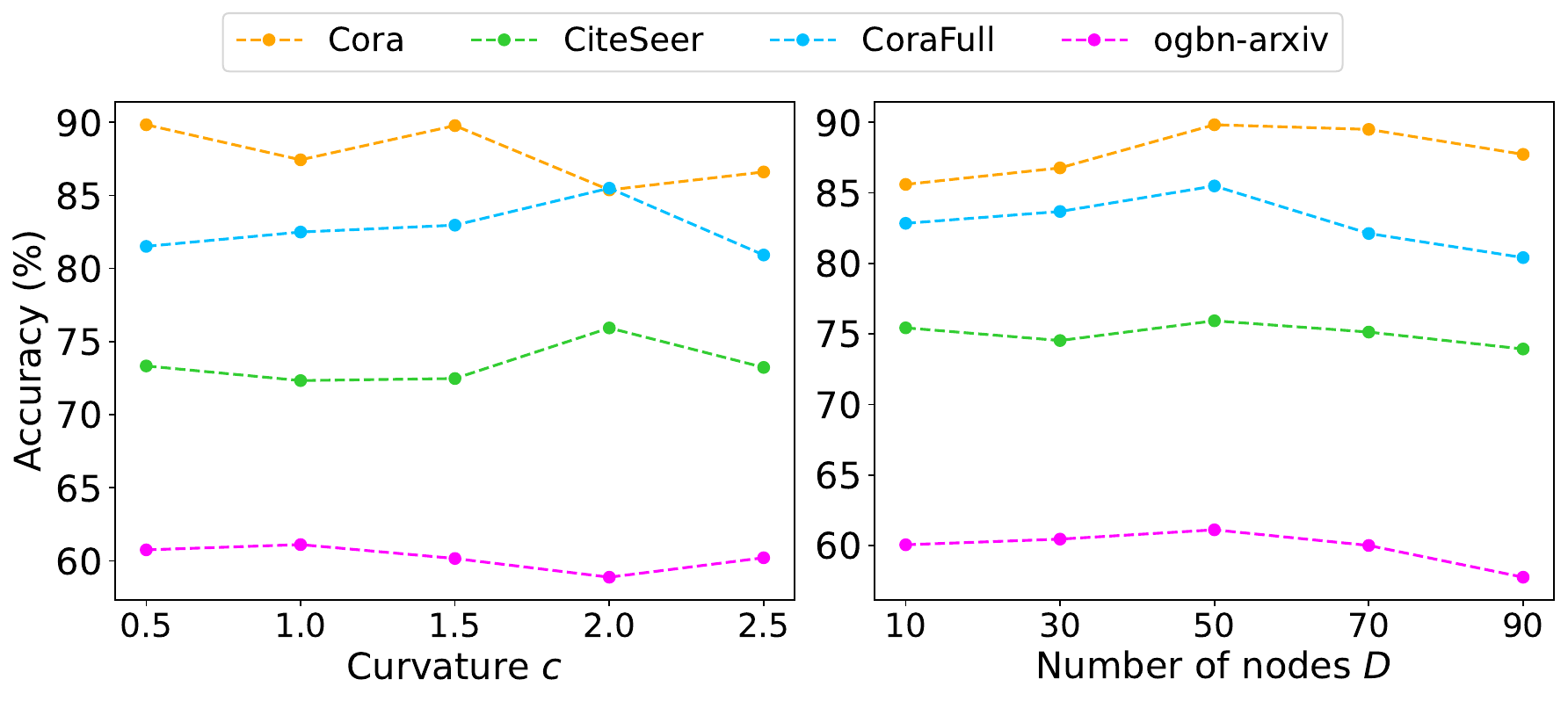}
  \caption{Performance varies with the curvature $c$ of the hyperbolic space (Left) and with the number of nodes $D$ generated by the diffusion model (Right).}
  \label{fig:hyper}
\end{figure}

We also provide empirical evidence demonstrating that our model is capable of learning hierarchical representations of nodes in \textbf{Appendix} \ref{hierarchical}. The related work is presented in \textbf{Appendix} \ref{cite_work}.
\section{Conclusion}
In this paper, we propose IMPRESS, a novel framework for graph few-shot learning. During the meta-training phase, we learn node embeddings in hyperbolic space to better capture the hierarchical structure of graph data, and simultaneously train a class-conditional denoising diffusion model. In the meta-testing phase, the trained diffusion model is used to generate additional samples, effectively expanding the limited support set. A classifier is then trained on the augmented support set and evaluated on the corresponding query set. We validate the effectiveness of our approach through both theoretical analysis and empirical experiments, demonstrating significant improvements over existing baseline methods.


\bibliography{example_paper}
\bibliographystyle{icml2026}

\newpage
\appendix
\onecolumn

\section{Appendix}

\subsection{Descriptions of Symbols}
\label{description_symbols}

We summarize the meanings of the main symbols in Table \ref{sym}.

\begin{table}[ht]
\centering
\caption{Detailed descriptions of important symbols used in our work.}
\begin{tabular}{c|c}
\toprule
\textbf{Symbols} & \textbf{Descriptions} \\
\midrule
$\mathcal{G}$, $\mathcal{V}$, $\mathcal{E}$ & graph, node set, and edge set \\
$X$, $A$ & node features, adjacency matrix \\
$\mathcal{T}_\text{tra}$, $\mathcal{T}_\text{tes}$ & meta-training and meta-testing tasks \\
$\mathcal{S}_{\text{test}}$, $\mathcal{Q}_{\text{test}}$ & support and query sets in the meta-testing task \\
$X^\mathbb H$ & hyperbolic node embeddings \\
$X^\mathcal{T}$ & tangent node embeddings \\
$\tilde{H}$, $\tilde{S}$ & refined node and set embeddings \\
$Z$ & latent node embeddings from the decoder \\
$Z_k$ & diffused node embeddings at step $k$ \\
$Z_{\text{spt}}$, $Z_{\text{qry}}$ & support and query embeddings \\
$Z_\text{gen}$ & generated node embeddings \\

\bottomrule
\end{tabular}
\label{sym}
\end{table}

\subsection{Complexity Analysis}
\label{complexity}
The overall computational cost of our framework mainly arises from the hyperbolic graph autoencoder and the diffusion model. The core operation of the hyperbolic graph autoencoder is graph convolution, whose time complexity is $O(|\mathcal{E}|d+|\mathcal{V}|d^2)$ , where $|\mathcal{E}|$ and $|\mathcal{V}|$ are the number of edges and nodes. $d$ is the feature dimension.

For the diffusion model, denoising is performed over $K$ steps for all nodes, resulting in a time complexity of $O(K|B|^2d+|B|d^2)$, where $|B|$ represents the number of nodes in each batch. Since both $|B|$ and $d$ are not particularly large in practice, the overall time complexity remains within an acceptable range.

\subsection{Theoretical Proofs}
\label{sec:theory}

\subsubsection{Proof of Theorem \ref{thm:hyp_advantage}}
\begin{proof}
The proof consists of three key steps:

\noindent \textbf{Step 1: Hyperbolic Distance Preservation.}

For any $v_i, v_j \in \mathcal{V}$, the $\delta$-hyperbolicity implies that the Gromov product $(v_i \cdot v_j)_o \geq \frac{1}{2}(d_g(o,v_i) + d_g(o,v_j) - d_g(v_i,v_j)) - \delta$ for any base point $o$. In the Poincaré ball model $\mathbb{H}^d$, this translates to:
\[
d_{\mathbb{H}}(\Theta(v_i), \Theta(v_j)) \leq 2\delta + 2\log_2(d_g(v_i,v_j)) + C_1,
\]
where $C_1$ accounts for the embedding distortion. The exponential map $\exp_o(v) = \tanh(\|v\|/2)\frac{v}{\|v\|}$ ensures that graph distances are preserved up to $\delta$-dependent distortion.

\noindent \textbf{Step 2: Euclidean vs. Hyperbolic Distortion.}

By Bourgain's theorem \cite{bourgain1985lipschitz}, any Euclidean embedding of a metric space must incur at least $\Omega(\sqrt{\log n})$ distortion for some node pairs. For hyperbolic embeddings, the distortion is bounded differently. Using the hyperbolic law of cosines:
\[
\cosh(\zeta d_{\mathbb{H}}(x,y)) = \cosh(\zeta r_x)\cosh(\zeta r_y) - \sinh(\zeta r_x)\sinh(\zeta r_y)\cos(\theta_{xy}).
\]
For $\delta$-hyperbolic graphs, setting $\zeta = \delta^{-1}$ yields:
\[
d_{\mathbb{H}}(\Theta(v_i), \Theta(v_j)) \leq (1 + \epsilon)\cdot d_g(v_i,v_j) + O(\delta),
\]
with $\epsilon = O(e^{-\delta})$. The ratio $\frac{d_{\mathbb{H}}}{d_{\mathbb{E}}} \leq \kappa(\delta)$ follows.

\noindent \textbf{Step 3: Generalization Error Bound.}

The variational autoencoder's evidence lower bound objective induces a latent space geometry where:
\[
\mathscr{D}_{\text{KL}}^{\mathbb{H}}(q_\theta(z|x) \| p(z)) \leq \frac{C_2 \text{dim}(z)}{\delta} \log(1/\epsilon),
\]
compared to the Euclidean KL divergence $\mathscr{D}_{\text{KL}}^{\mathbb{E}} \geq \Omega(\text{dim}(z)\|\mu\|^2)$. For a $\delta$-hyperbolic graph with $\delta > \log n$, the hyperbolic encoder's generalization error $R_{\mathbb{H}}$ satisfies:
\[
R_{\mathbb{H}} \leq R_{\mathbb{E}} - C_3 \sqrt{\log n},
\]
\end{proof}



\subsubsection{Proofs of Theorem \ref{thm:robustness}}

The classification error bound in Theorem \ref{thm:robustness} consists of three components. In the following, we provide a detailed analysis and proof for each of them individually.

\begin{proof}[Proof of the First Term]
In the Poincaré ball model, node embeddings $z_i$ follow $\mathcal{N}_{\mathbb{H}}(\mathcal{P}_j, \sigma_{\mathbb{H}}^2)$. For any $z_i \in \tilde{\mathcal{S}}$, the hyperbolic variance assumption implies:

\begin{equation}
\mathbb{E}[\cosh^2(\zeta \text{d}_{\mathbb{H}}(z_i, \mathcal{P}_j)) ] \leq 1 + \frac{2}{\zeta^2} \sigma_{\mathbb{H}}^2, \quad \zeta = \sqrt{-c}.
\end{equation}

Construct the separation function using McShane extension:
\begin{equation}
h_j(z_i) = \frac{ \cosh(\zeta \delta_{\mathbb{H}}) - \cosh(\zeta \text{d}_{\mathbb{H}}(z_i, \mathcal{P}_j)) }{ \zeta^2 \sinh(\zeta \delta_{\mathbb{H}}) }.
\end{equation}

This satisfies:
\begin{equation}
|h_j(z_i) - \mathbb{I}(y_i = j)| \leq \frac{2 \cosh(\zeta \text{d}_{\mathbb{H}}(z_i, \mathcal{P}_j))}{\zeta^2 \sinh(\zeta \delta_{\mathbb{H}})}.
\end{equation}

Combining with variance bound:
\begin{align}
&\mathbb{E} \left| h_j(z_i) - \mathbb{I}(y_i = j) \right| \leq \frac{2(1 + \frac{2}{\zeta^2} \sigma_{\mathbb{H}}^2)}{\zeta^2 \sinh(\zeta \delta_{\mathbb{H}})} \\
&\leq \frac{2c}{\delta_{\mathbb{H}}^2} \sigma_{\mathbb{H}}^2 \quad \text{(using $\sinh(z) \approx z$ for moderate $z$)}.
\end{align}
\end{proof}

\begin{proof}[Proof of the Second Term]
For the Lipschitz classifier $f$ with $\mathcal{W}$ distance assumption:
\begin{align}
|\hat{\mathcal{R}}_{\text{dm}} - \mathcal{R}_{\text{true}}| &\leq L_f \cdot \mathbb{E}_{z \sim q_{\text{dm}}} [ \text{d}_{\mathcal{N}}(z, z_{\text{true}}) ] \\
&\leq L_f \cdot \epsilon_d
\end{align}
where $C$ is the Lipschitz constant for Poincaré-to-tangent space projection.
\end{proof}

\begin{proof}[Proof of the third term]
To prove the third term, we first present the following lemma.
\begin{lemma}
\label{lem:variance}
For a $N$-class classification problem with $M+D$ generated samples per class, let $\mathcal{F}$ be the hypothesis class with Rademacher complexity $\mathcal{R}(\mathcal{F})$. The variance term is bounded by:
\begin{equation}
\sup_{f \in \mathcal{F}} \left| \hat{\mathcal{R}}_{\text{dm}}(f) - \mathbb{E}[\hat{\mathcal{R}}_{\text{dm}}(f)] \right| \leq \mathcal{O}\left( \sqrt{\frac{N \log (M+D)}{M+D}} \right)
\end{equation}
with probability at least $1 - \wp$ for $\wp \in (0,1)$.
\end{lemma}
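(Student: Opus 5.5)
The plan is a standard uniform-convergence argument via Rademacher complexity. I treat the augmented support set $\tilde{\mathcal{S}}$ as $N$ class-conditional samples of size $M+D$ each, drawn i.i.d.\ given the class. Independence of the generated embeddings $Z_{\text{gen}}$ is justified by the fresh Gaussian noise used for each of them. I assume the loss $\ell\circ f$ takes values in $[0,1]$, which holds for the $0$--$1$ or clipped surrogate loss. The empirical risk then decomposes as
\[
\hat{\mathcal{R}}_{\text{dm}}(f)=\frac{1}{N}\sum_{j=1}^{N}\frac{1}{M+D}\sum_{i=1}^{M+D}\ell(f(z_{ij}),j).
\]

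\textbf{Step 1: concentration of the supremum.} Let
\[
\Delta=\sup_{f\in\mathcal F}\left|\hat{\mathcal R}_{\text{dm}}(f)-\mathbb E[\hat{\mathcal R}_{\text{dm}}(f)]\right|.
\]
Changing one sample changes $\Delta$ by at most $1/(N(M+D))$. McDiarmid's inequality therefore gives, with probability at least $1-\wp$,
\[
\Delta\le\mathbb E[\Delta]+\sqrt{\frac{\log(2/\wp)}{2N(M+D)}}.
\]
The deviation term is already of lower order than the target rate.

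\textbf{Step 2: symmetrization.} Symmetrizing within each class gives
\[
\mathbb E[\Delta]\le 2\,\frac{1}{N}\sum_j\mathcal R_{M+D}(\ell\circ\mathcal F_j).
\]
I then move from the loss to the classifier. For the multiclass linear classifier trained in the Model Prediction step, I use the vector-contraction inequality for Lipschitz multiclass losses. This contributes a factor of order $\sqrt N$ times the complexity of a single score class, so $\mathbb E[\Delta]\lesssim \sqrt N\,\mathcal R_{M+D}(\mathcal F)$.

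\textbf{Step 3: bounding $\mathcal R_{M+D}(\mathcal F)$.} This is where the $\log(M+D)$ factor enters. I do not use the norm-based bound $BR/\sqrt{M+D}$, because it would not produce the logarithm. Instead I use Massart's finite-class lemma or Dudley's entropy integral on the projection of $\mathcal F$ onto the $M+D$ points. For a linear classifier on bounded latent embeddings, the Natarajan/VC dimension is finite. Sauer's lemma then gives growth function at most $(M+D)^{O(d)}$, and hence
\[
\mathcal R_{M+D}(\mathcal F)\lesssim\sqrt{\frac{\log (M+D)}{M+D}},
\]
with dimensional constants absorbed into $\mathcal O(\cdot)$. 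Combined with Step 2 this yields the rate $\sqrt{N\log(M+D)/(M+D)}$. The confidence term from Step 1 is dominated once $M+D\ge 3$ and $\wp$ is fixed.

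\textbf{Main obstacle.} The hardest step is making the i.i.d.\ assumption in Step 1 honest. The $M$ real support embeddings and the $D$ diffusion samples come from different distributions, and all $D$ samples share the same prototype condition $\mathscr S_{\text{tes}}$. My first approach is to condition on the prototypes $P_{\text{tes}}$. Given $P_{\text{tes}}$, the generated samples are i.i.d.\ from the model distribution, and the expectation in the lemma is interpreted as conditional on $P_{\text{tes}}$, so the mismatch between the model distribution and the true one is not charged to this lemma. I treat the mixture of $M$ real and $D$ generated points as independent but non-identically distributed. McDiarmid's inequality and symmetrization only require independence, so the argument still goes through.
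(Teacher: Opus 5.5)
Your skeleton (McDiarmid with bounded differences $1/(N(M+D))$, then symmetrization, then a bound on the Rademacher term) is the paper's. The difference is where the logarithm comes from. The paper bounds the Rademacher term by a finite-class estimate $\sqrt{2N\log|\mathcal F|/(M+D)}$, treating $\log|\mathcal F|$ as a constant, and $\log(M+D)$ enters only through the choice $\wp=1/(M+D)$ in the McDiarmid deviation term. You keep $\wp$ arbitrary and extract $\log(M+D)$ from Sauer's lemma. That is the better choice: $|\mathcal F|$ is infinite for the linear classifier actually trained, and your version respects the statement's quantifier over all $\wp\in(0,1)$. You also confront the independence of the augmented sample, which the paper never addresses. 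However, two steps do not go through as written.

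\textbf{The $\sqrt N$ from vector contraction.} Maurer's inequality bounds the loss complexity by $\frac{\sqrt2 L}{M+D}\,\mathbb E\sup_f\sum_i\sum_{k=1}^N\varsigma_{ik}f_k(z_i)$. When the score functions range over a product class (e.g.\ $\|w_k\|\le B$ row by row), this is $N$ times, not $\sqrt N$ times, the single-score complexity. You get $\sqrt N$ essentially only under a joint Frobenius constraint $\|W\|_F\le B$, i.e.\ through the norm computation you set aside. Moreover, Massart and Sauer in Step~3 need a finite-valued class, while contraction leaves you with real-valued scores. Either of two fixes works.
\begin{itemize}
\item Keep contraction and do the Frobenius computation. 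It gives $O(LBR\sqrt{N/(M+D)})$, which already implies the statement for $M+D\ge 3$. A bound without the logarithm is smaller, so you never needed to manufacture it.
\item Drop contraction and use the $0$--$1$ loss. For class $j$, $\{z\mapsto\mathbb I(f(z)\neq j)\}$ is the complement of an intersection of $N-1$ halfspaces in $\mathbb R^d$, so its VC dimension is $V=O(Nd\log N)$. Massart plus Sauer then give $\sqrt{2V\log(e(M+D)/V)/(M+D)}$, which is the target rate up to a factor $\sqrt{d\log N}$. Here the $\sqrt N$ is carried by the VC dimension.
\end{itemize}

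\textbf{Conditioning on $P_{\text{tes}}$.} This does not leave the $M$ real support points independent: their class mean is pinned, so McDiarmid cannot be applied to them. Without conditioning it is worse. Changing one real point moves its prototype and hence possibly all $D$ generated points of that class. Its bounded-difference constant is then of order $D/(N(M+D))$, and $\sum_i c_i^2$ no longer vanishes when $D\gg M$. Condition on all of $Z_{\text{spt}}$ instead.
\begin{itemize}
\item The real-sample terms become constants that cancel in $\hat{\mathcal R}_{\text{dm}}(f)-\mathbb E[\hat{\mathcal R}_{\text{dm}}(f)\mid Z_{\text{spt}}]$, and only the $ND$ conditionally i.i.d.\ generated points remain.
\item McDiarmid then has $\sum_i c_i^2=D/(N(M+D)^2)\le 1/(N(M+D))$.
\item Symmetrization gives a per-class term $\frac{D}{M+D}\mathcal R_D\le\sqrt{2V\log(e(M+D)/V)/(M+D)}$.
\end{itemize}
So the same rate follows.
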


\begin{proof}
We proceed in four steps:

\noindent \textbf{Step 1: McDiarmid's Inequality}\\
Define $\varphi(\mathbf{z}_1^{(1)},...,\mathbf{z}_{M+D}^{(N)}) = \sup_{f \in \mathcal{F}} |\hat{\mathcal{R}}_{\text{dm}}(f) - \mathbb{E}[\hat{\mathcal{R}}_{\text{dm}}(f)]|$. If we change one sample $\mathbf{z}_i^{(k)}$ to $\tilde{\mathbf{z}}_i^{(k)}$, the variation is bounded by:
\begin{equation}
|\varphi(\cdot) - \varphi(z_i^{(K)} \to \tilde{z}_i^{(K)})| \leq \frac{L_\ell}{N(M+D)}
\end{equation}
where $L_\ell$ is the Lipschitz constant of the loss function. McDiarmid's inequality gives:
\begin{equation}
\mathbb{P}\left( \varphi - \mathbb{E}[\varphi] \geq \tau \right) \leq \exp\left( -\frac{2N(M+D) \tau^2}{L_\ell^2} \right)
\end{equation}

\noindent \textbf{Step 2: Symmetrization}\\
The expected deviation $\mathbb{E}[\varphi]$ can be bounded via Rademacher complexity:
\begin{equation}
\mathbb{E}[\varphi] \leq 2\mathcal{R}_M(\mathcal{F}) + L_\ell\sqrt{\frac{\log(1/\wp)}{2N(M+D)}}
\end{equation}
where $\mathcal{R}_M(\mathcal{F}) = \mathbb{E}\left[ \sup_{f \in \mathcal{F}} \frac{1}{N(M+D)} \sum_{n,i} \varsigma_i^{(n)} \ell(f(z_i^{(n)}),n) \right]$ with Rademacher variables $\varsigma_i^{(k)} \in \{\pm1\}$.

\noindent \textbf{Step 3: Class-Wise Complexity}\\
For $N$ classes, the Rademacher complexity decomposes as:
\begin{align}
\mathcal{R}_M(\mathcal{F}) &\leq \sum_{n=1}^N \mathbb{E}\left[ \sup_{f \in \mathcal{F}} \frac{1}{N(M+D)} \sum_{i=1}^{M+D} \varsigma_i^{(n)} \ell(f(z_i^{(n)}),n) \right] \\
&\leq \sqrt{\frac{2N \log|\mathcal{F}|}{M}}
\end{align}

\noindent \textbf{Step 4: Union Bound}\\
Setting $\wp = 1/M+D$ and combining Steps 1-3:
\begin{equation}
\varphi \leq \underbrace{2\sqrt{\frac{2N \log|\mathcal{F}|}{M+D}}}_{\text{Rademacher}} + \underbrace{L_\ell\sqrt{\frac{\log (M+D)}{K(M+D)}}}_{\text{McDiarmid}} = \mathcal{O}\left( \sqrt{\frac{N \log (M+D)}{(M+D)}} \right)
\end{equation}
\end{proof}

Combing the above terms, we have:
\begin{equation}
\mathcal{E}_{\text{err}} \leq \underbrace{\frac{2B\sigma_{\mathbb{H}}^2}{\delta_{\mathbb{H}}^2}}_{\text{Hyperbolic}} + \underbrace{\epsilon_d L_f}_{\text{Diffusion}} + \underbrace{\sqrt{\frac{N\log (M+D)}{M+D}}}_{\text{Variance}}
\end{equation}
Thus, we complete the proof of Theorem \ref{thm:robustness}.
\end{proof}

\subsection{Details of Datasets}
\label{dataset_description}
In this section, we provide the detailed descriptions of these evaluated datasets.

\noindent \textbf{CoraFull} \cite{bojchevski2017deep}: It is a well-known citation network, where each node corresponds to a scientific publication and edges denote citation links between papers. The nodes are categorized according to the topics of the papers. In our experiments, we divide the dataset into 25, 20, and 25 classes for meta-training, meta-validation, and meta-testing, respectively.

\noindent \textbf{Coauthor-CS} \cite{shchur2018pitfalls}: It represents a co-authorship graph, in which nodes correspond to individual researchers, and an edge exists if two researchers have co-authored a paper. For this dataset, we follow a 5/5/5 class split for the meta-training, meta-validation, and meta-testing phases.

\noindent \textbf{Cora} \cite{yang2016revisiting}: Similar to CoraFull, it is also a citation network in which nodes denote papers and edges indicate citation relationships. We adopt a 3/2/2 class division for meta-training, meta-validation, and meta-testing tasks.

\noindent \textbf{WikiCS} \cite{mernyei2020wiki}: It is derived from Wikipedia articles in the field of computer science. Each node is an article, and edges are formed based on the hyperlink. The dataset is partitioned into 4, 3, and 3 classes for meta-training, meta-validation, and meta-testing.

\noindent \textbf{ML} \cite{bojchevski2017deep}: It focuses exclusively on machine learning papers, treating each node as a paper and linking them via citation relationships. For this dataset, we apply a 3/2/2 class split.

\noindent \textbf{CiteSeer} \cite{yang2016revisiting}: It is another citation-based graph, where scientific articles are nodes and citation links serve as edges. We utilize 2 classes each for training, validation, and testing in the meta-learning framework.

\noindent \textbf{ogbn-arxiv} \cite{hu2020open}: It is a large-scale citation network of arXiv papers in computer science. The nodes represent individual papers, and the edges are constructed based on citation relationships. Each node is labeled according to one of 40 subject areas defined in arXiv. We use a class split of 20/10/10 for this dataset.

\subsection{Details of Baselines}
\label{baseline_description}
In this section, we present the detailed descriptions of the used baselines, which includes four categories: \textit{Graph embedding methods}, \textit{Traditional meta-learning methods}, \textit{Graph meta-learning methods}, and \textit{Unsupervised graph meta-learning methods}.

\vspace{0.5em}
\noindent \textit{A.4.1 \quad Graph Embedding Methods.}

\noindent \textbf{GCN} \cite{kipf2016semi}:
It employs a first-order approximation of Chebyshev graph filters to efficiently propagate information across nodes, enabling the extraction of meaningful hidden representations for downstream tasks.

\noindent \textbf{SGC} \cite{wu2019simplifying}:
It mitigates the redundant complexity of GCN by eliminating the non-linear activations and collapsing the weight matrices.

\vspace{0.5em}
\noindent \textit{A.4.2 \quad Traditional Meta-learning Methods.}

\noindent \textbf{ProtoNet} \cite{snell2017prototypical}:
It addresses few-shot classification by learning a metric space where samples are classified based on their proximity to class-specific prototypes.

\noindent \textbf{MAML} \cite{finn2017model}:
By performing a few gradient updates during meta-training phase, it acquires a parameter initialization that facilitates efficient adaptation to novel tasks with limited labeled data.

\vspace{0.5em}
\noindent \textit{A.4.3 \quad Graph Meta-learning Methods.}

\noindent \textbf{Meta-GNN} \cite{zhou2019meta}:
It integrates MAML with GNN, learning transferable knowledge from few-shot tasks, allowing the model to quickly adapt to new node classification tasks with limited labeled nodes.

\noindent \textbf{TENT} \cite{wang2022task}: This framework introduces three adaptation modules operating at node-level, class-level, and task-level. These components bridge the generalization gap between meta-training and meta-testing, and enable the model to flexibly adjust to diverse meta-tasks.

\noindent \textbf{Meta-GPS} \cite{liu2022few}
It is a graph meta-learning framework that combines prototype-based initialization and scaling and shifting transformations to enable effective transferable knowledge learning and adaptation.

\noindent \textbf{TLP} \cite{tan2022transductive}:
It employs transductive linear probing by first pretraining a graph encoder via contrastive learning and subsequently utilizing it to generate node embeddings for meta-testing.

\noindent \textbf{X-FNC} \cite{wang2023few}:
It applies Poisson-based label propagation to generate abundant pseudo-labeled nodes, followed by node classification and an information bottleneck strategy to filter out irrelevant signals.

\noindent \textbf{TEG} \cite{kim2023task}:
It proposes a task-equivariant framework for graph few-shot learning that leverages equivariant neural networks to capture task-specific inductive biases and learn adaptive strategies for individual tasks.

\noindent \textbf{COSMIC} \cite{wang2023contrastive}:
It presents a contrastive meta-learning framework for graphs, which aligns node embeddings within each class through a two-step contrastive optimization and improves cross-class generalization by generating challenging node classes via a similarity-sensitive mix-up strategy.

\noindent \textbf{TaskNS} \cite{zhang2025unlocking}:
It incorporates task-specific negative samples into meta-training, employs a topology-aware sampling scheme, and adopts a customized loss function to enhance intra-class compactness and inter-class separation.

\vspace{0.5em}
\noindent \textit{A.4.4 \quad Unsupervised Graph Meta-learning Methods.}

\noindent \textbf{VNT} \cite{tan2023virtual}:
It employs a pretrained graph transformer encoder and optimizes virtual nodes as soft prompts using few-shot labels to adapt node embeddings for each task.

\noindent \textbf{NaQ} \cite{jung2024unsupervised}:
It introduces an unsupervised episode generation strategy, leveraging neighbor-based query sampling to fully exploit unlabeled graph data while maintaining compatibility with existing supervised graph meta-learning models.

\begin{table*}[ht]
\centering
\caption{Results of hierarchical clustering for different models.}
\label{cluster}
\begin{tabular}{@{}c|cccccccccccc@{}}
\toprule
\multirow{2}{*}{Model} & \multicolumn{2}{c}{CoraFull}      & \multicolumn{2}{c}{Coauthor-CS}       & \multicolumn{2}{c}{Cora}      & \multicolumn{2}{c}{WikiCS}      & \multicolumn{2}{c}{ML}        & \multicolumn{2}{c}{CiteSeer}         \\ \cmidrule(l){2-13} 
                       & SC$\uparrow$   & DB$\downarrow$ & SC$\uparrow$    & DB$\downarrow$ & SC$\uparrow$   & DB$\downarrow$ & SC$\uparrow$   & DB$\downarrow$ & SC$\uparrow$   & DB$\downarrow$ & SC$\uparrow$   & DB$\downarrow$ \\ \midrule
TEG                  & 0.099          & 1.591          &    0.203          & 1.304            & 0.159          & 1.835          & 0.068          & 1.193          & 0.166         & 1.436          & 0.061          & 3.763                    \\
COSMIC                  & -0.050         & 3.807          & 0.131             & 1.934            & 0.068          & 3.272          & 0.036            & 2.493            & 0.033          & 3.312          & -0.007          & 4.926                    \\
Ours                   & \textbf{0.105} & \textbf{1.560} & \textbf{0.299} & \textbf{1.217} & \textbf{0.172} & \textbf{1.728} & \textbf{0.199} & \textbf{1.100} & \textbf{0.211} & \textbf{1.359} & \textbf{0.113} & \textbf{1.966} \\ \bottomrule
\end{tabular}%
\end{table*}

\subsection{Hierarchical Property}
\label{hierarchical}
To demonstrate that our model effectively captures the hierarchical structure of nodes through hyperbolic embeddings, we perform hierarchical clustering on the learned node representations and compare the results against two representative baselines, TEG and COSMIC. We adopt the Silhouette Coefficient (SC) score (higher is better) and the Davies-Bouldin (DB) score (lower is better) as evaluation metrics, and the results are summarized in Table \ref{cluster}. As can be clearly observed in Table \ref{cluster}, our model consistently achieves the best performance in hierarchical clustering, indicating that the node embeddings learned in hyperbolic space exhibit more pronounced hierarchical structures compared to the baselines trained in Euclidean space.

\subsection{Related Work}
\label{cite_work}
\subsubsection{Graph Few-shot Learning}
Graph FSL aims to equip models with the ability to rapidly generalize to novel tasks with scarce labeled data by utilizing meta-knowledge obtained from prior learning experiences. This paradigm has been increasingly adopted in various real-world applications \cite{tan2023virtual, wang2023contrastive, zhang2025few}. Current graph FSL methods can be broadly classified into two categories: optimization-based \cite{huang2020graph, kim2023task, liu2022few, zhou2019meta} and metric-based \cite{ding2020graph, liu2024simple, yao2020graph}. Optimization-based methods typically integrate graph neural networks (GNNs) with meta-learning algorithms such as MAML \cite{finn2017model}, enabling models to adapt their parameters efficiently across tasks. Metric-based methods focus on learning a task-general metric space that can effectively measure distance among numerous unlabeled nodes and a small number of labeled ones. Most existing graph FSL methods acquire transferable knowledge during the meta-training phase, which is constructed from a large set of base-class tasks with sufficient labeled data \cite{zhou2019meta, liu2024simple}. Representative examples include Meta-GPS \cite{liu2022few}, TEG \cite{kim2023task}, and TaskNS \cite{zhang2025unlocking}. Meta-GPS facilitates knowledge generalization by combining parameter initialization with task-wise scaling and shifting transformations. TEG attempts to model task-level inductive biases by adopting equivariant neural networks, thus alleviating the dependency on labeled supervision. TaskNS aims to learn discriminative node embeddings by performing negative sampling in each task.

\subsubsection{Hyperbolic Neural Networks}
In recent years, hyperbolic embeddings have emerged as a powerful tool in representation learning, particularly for capturing hierarchical and tree-like structures \cite{mettes2024hyperbolic}. Poincaré embeddings \cite{nickel2017poincare} are introduced for learning symbolic representations within hyperbolic geometry and have shown strong capabilities in modeling hierarchical relationships. Following this, an alternative optimization approach based on the Lorentz model of hyperbolic space is proposed \cite{nickel2018learning}, which substantially improved embedding quality. 
Recent studies have attempted to incorporate hyperbolic space into graph-structured data.
For instance, HGCN \cite{chami2019hyperbolic} integrates the representational capacity of GCNs with hyperbolic geometry to learn inductive node representations for hierarchical and scale-free graphs. The concurrently developed HGNN \cite{liu2019hyperbolic} is also proposed for learning GNNs in hyperbolic space. 
Inspired by these developments, our model leverages the Poincaré ball model to map node embeddings from Euclidean space into hyperbolic space, aiming to better preserve hierarchical structures in the latent space.

\subsubsection{Diffusion Models}
Diffusion models are a class of generative approaches that learn data distributions and generate new samples through the process of gradually adding noise and inverse denoising \cite{croitoru2023diffusion}. 
The well-known denoising diffusion probabilistic models \cite{ho2020denoising} lay the foundation for modern diffusion models by introducing a simplified framework consisting of forward process and reverse process. 
Further, latent diffusion models \cite{rombach2022high} improves diffusion models by performing the diffusion process in compressed latent space rather than pixel space, which significantly improves computational efficiency while preserving generation quality. In recent years, diffusion models have been widely used in fields such as text-to-image generation \cite{rombach2022high, ramesh2022hierarchical} and text-to-audio \cite{yang2023diffsound, huang2022prodiff}. 
Motivated by their strong generative capability and flexibility, we incorporate diffusion models into our framework to enhance the performance of graph FSL.


\end{document}